\theoremstyle{plain}
\newtheorem{theorem}{Theorem}[section]
\newtheorem{proposition}[theorem]{Proposition}
\newtheorem{lemma}[theorem]{Lemma}
\theoremstyle{definition}
\theoremstyle{remark}
\newcommand{\R}{\mathbb{R}}
\newcommand{\mW}{\bm W}
\newcommand{\paren}[1]{\left(#1\right)}
\title{Compressible Dynamics in Deep Overparameterized Low-Rank Learning \& Adaptation}
\affiliation{
  Department of Electrical Engineering \& Computer Science, University of Michigan
}
\keywords{Low-Rank, Overparameterization, Compression, Adaptation}
\date{\today}
\begin{document}

\makeDeepthinkHeader

\begin{figure}[h!]
\centering
\vspace{-2em}
\includegraphics[width=0.32\linewidth]{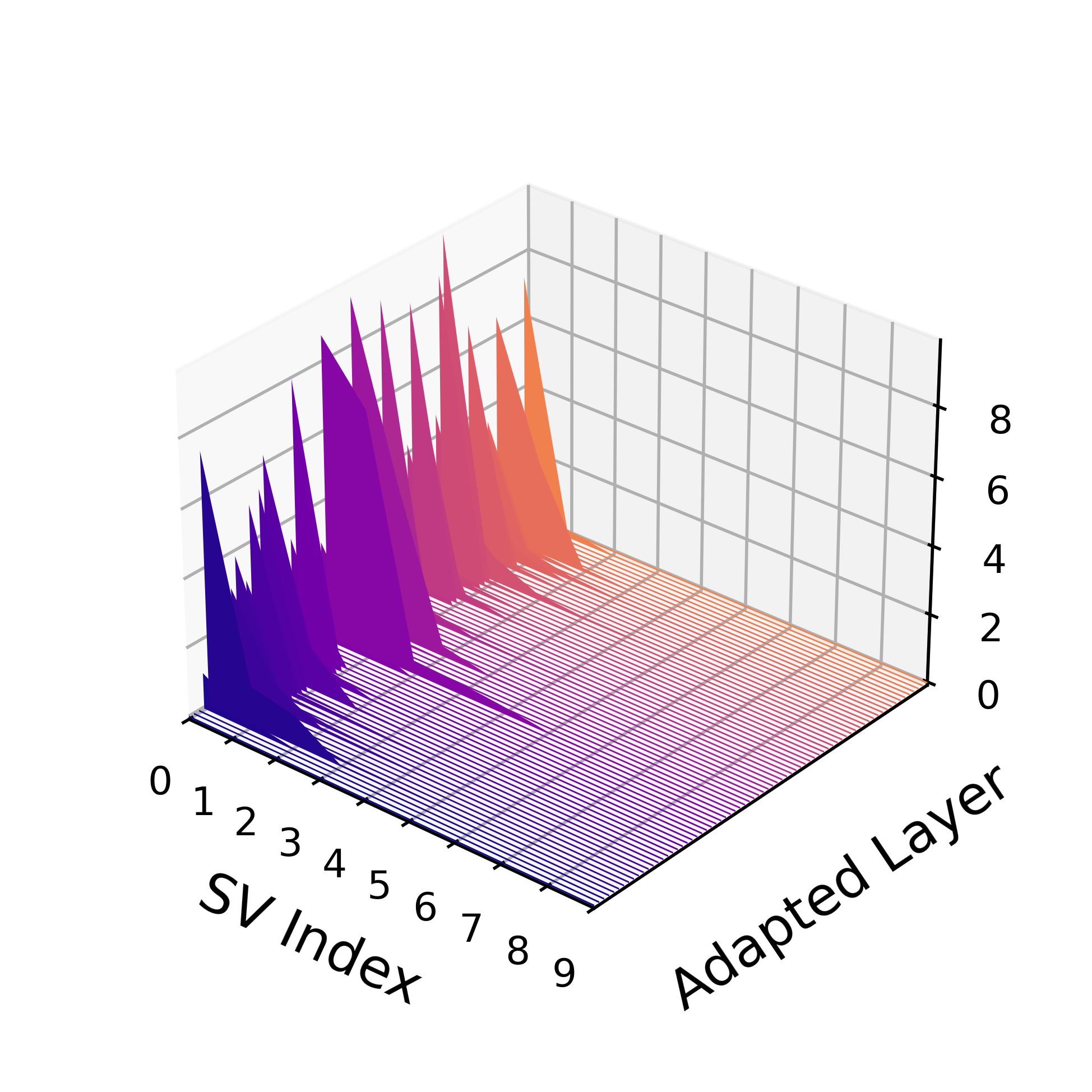}
\includegraphics[width=0.32\linewidth]{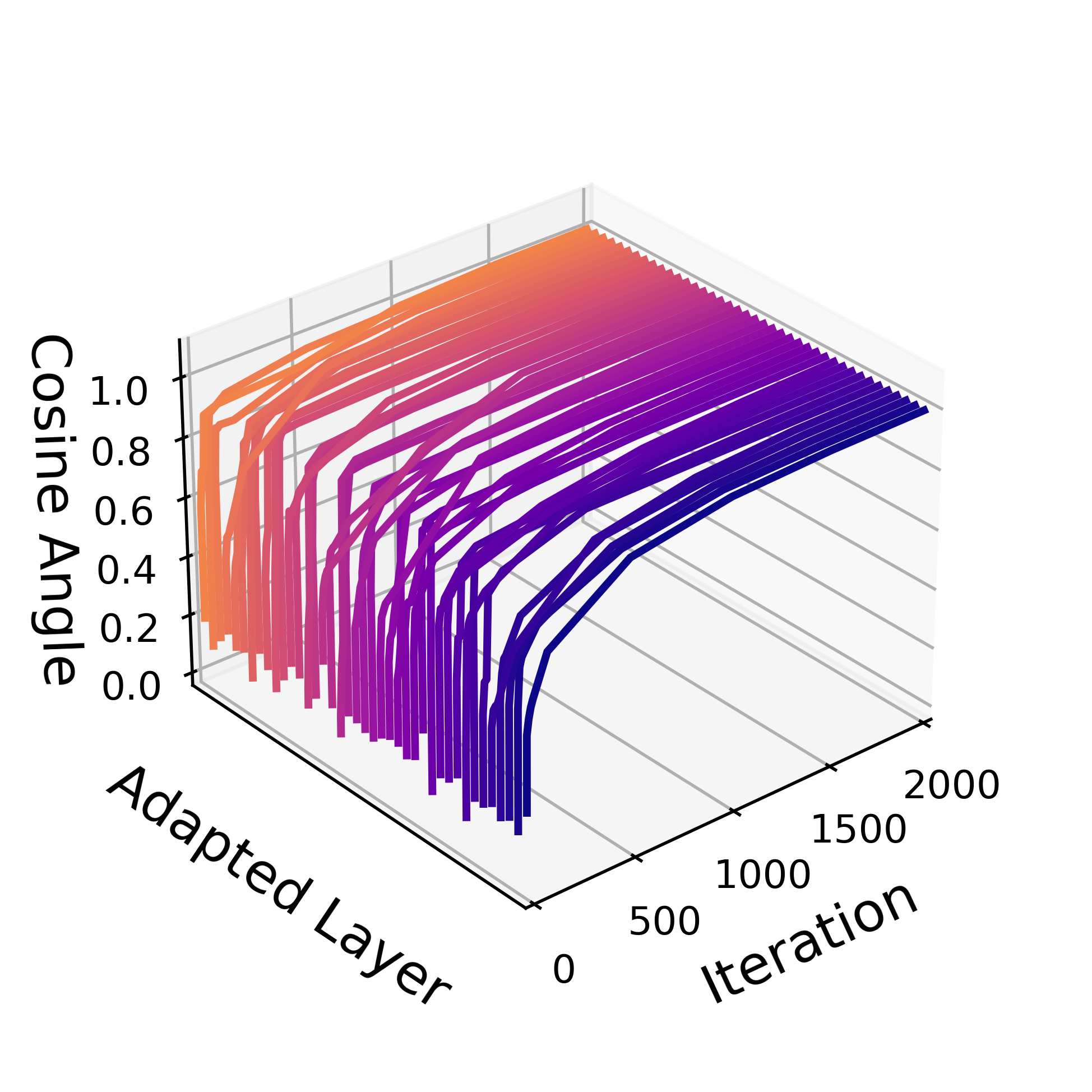}
\includegraphics[width=0.32\linewidth]{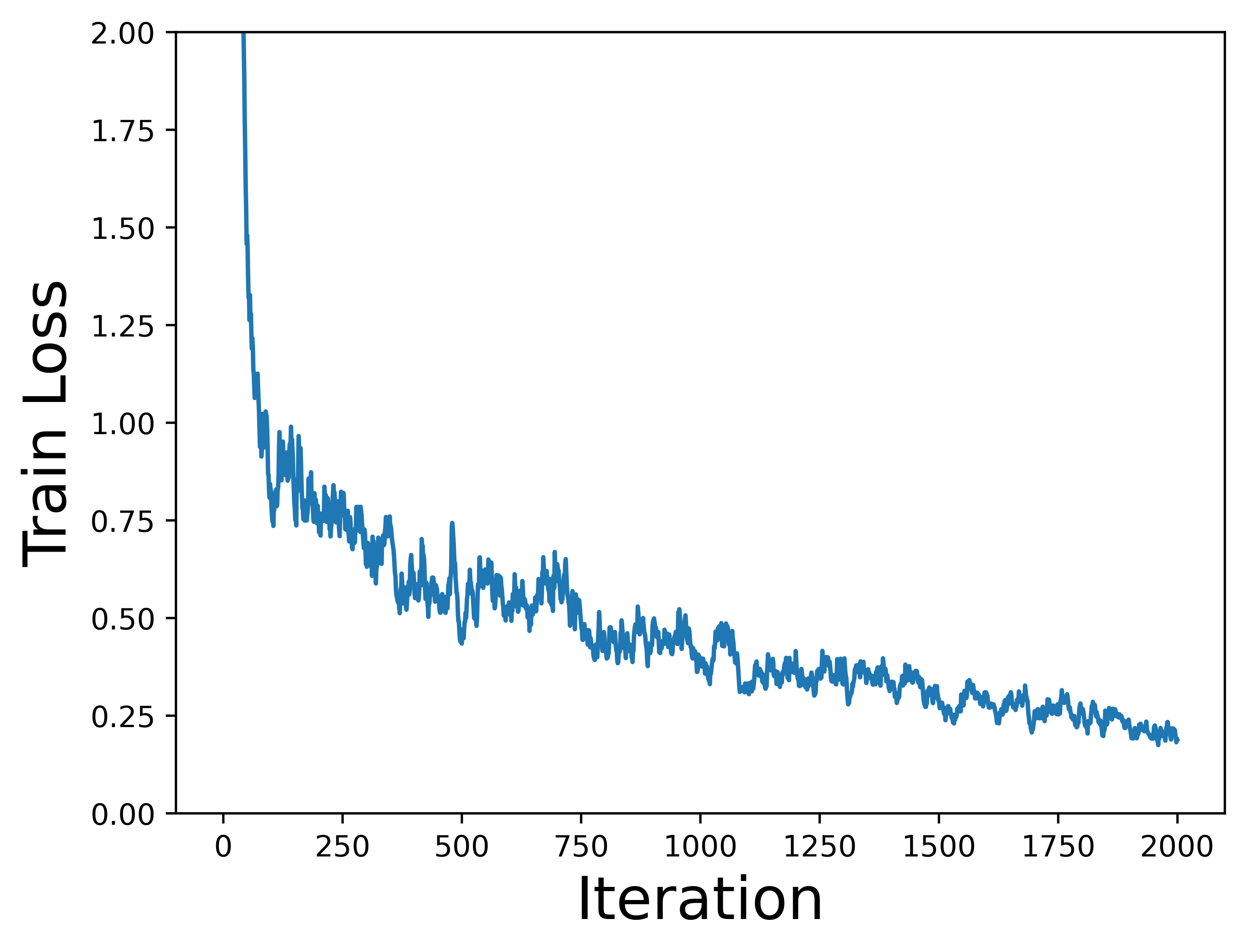}
\caption{\textbf{Invariant low-dimensional subspaces in deep overparameterized adaptation of language models.} Fine-tuning BERT \citep{devlin2019bert} with deep overparameterized adaptation on the STS-B dataset \citep{cer2017semeval}. \textit{Left}: \textbf{Singular value} spectra across all adapted layers at the end of fine-tuning. \textit{Middle}: \textbf{Alignment of subspaces} formed by top 8 right singular vectors between current adapted weights and final adapted weights throughout training. \textit{Right}: \textbf{Training loss} continues to decrease in iterations after subspace alignment with final adapted weights. See \Cref{sec:dclora} for more details.}
\label{fig:low_rank_subspace}
\end{figure}

\newpage
\tableofcontents
\newpage

\section{Introduction}\label{sec:intro}
In recent years, there has been a growing interest within the realm of deep learning in \textit{overparameterization}, which refers to employing a greater number of model parameters than necessary to interpolate the training data. While this may appear counterintuitive initially due to the risk of overfitting, it has been demonstrated to be an effective modeling approach \citep{zhang2021understanding,wu2017towards,allen2019learning,buhai2020empirical,xu2018benefits}, primarily attributed to improved optimization landscape and implicit algorithmic regularization. In the context of large language models (LLMs) \citep{radford2019language,brown2020language}, empirical scaling laws \citep{kaplan2020scaling} suggest that larger models are more sample efficient, often requiring fewer samples to reach the same test loss. 

Taking the problem of low-rank matrix recovery as an illustrative example, the seminal work of \citet{arora2019implicit} showed that deeper factorizations better promote low-rank solutions as a function of depth, consequently mitigating overfitting in the overparameterized regime compared to a classical two-layer factorization approach; see \Cref{fig:depth_2_v_3} (left). On the other hand, increasing the width of each layer substantially reduces the number of iterations to reach the same training error; see \Cref{fig:depth_2_v_3} (right).

\begin{figure}[h]
\centering
\includegraphics[width=0.8\linewidth]{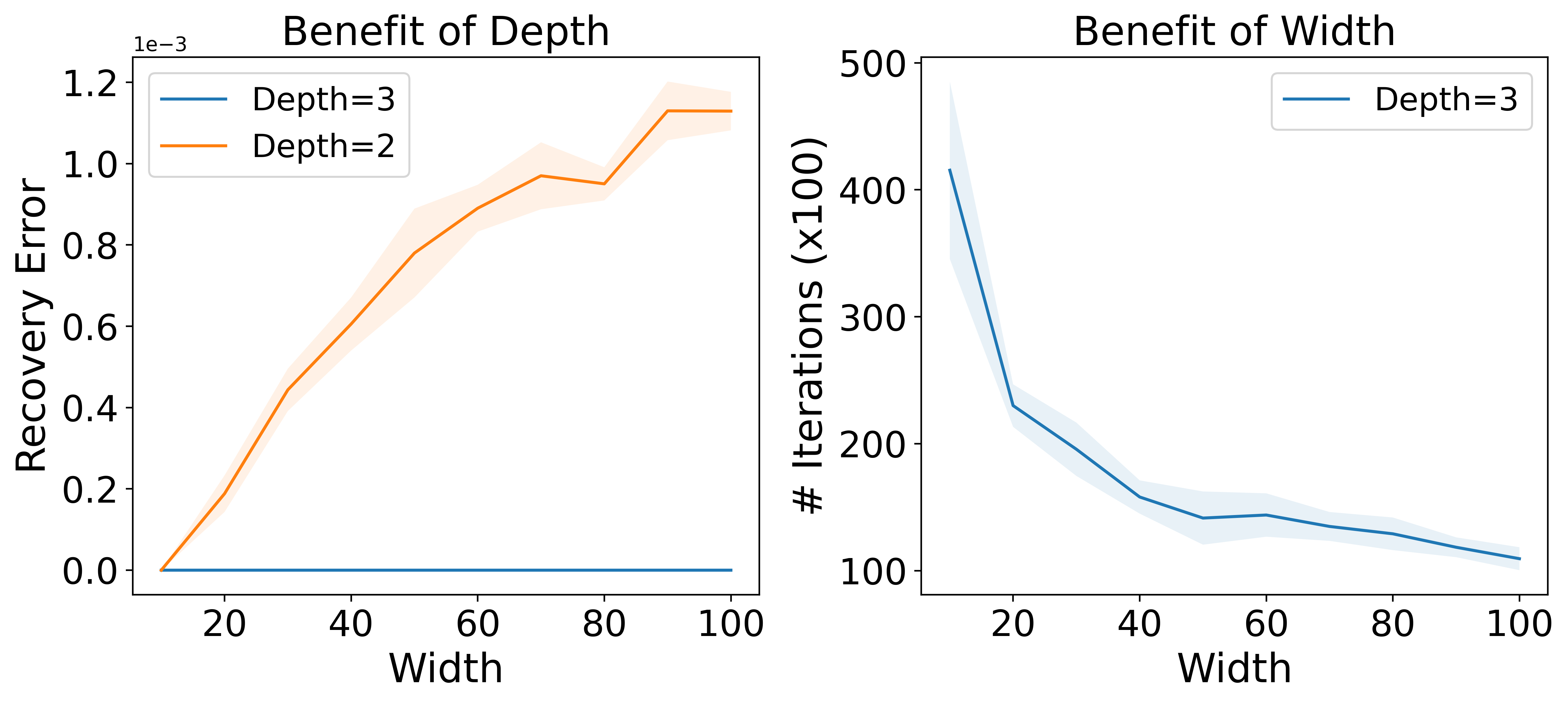}
\caption{\textbf{Benefits of depth \& width in overparameterized matrix completion} with $d=100$, $r^*=5$, $\epsilon_l = 10^{-3}$ and 30\% of entries observed. \textit{Left}: Recovery error vs. width for shallow and deep factorizations. \textit{Right}: Number of GD iterations to converge to $10^{-10}$ error vs. width. We observe that depth prevents overfitting, while width improves convergence.}
\label{fig:depth_2_v_3}
\end{figure}

While overparameterization offers remarkable benefits, it also comes with its computational challenges. The significantly increased number of parameters inevitably results in dramatically higher computational costs. This naturally raises a fundamental question: can we attain the benefits of overparameterization with a substantial reduction in computational costs?

In this work, we show that we can achieve this by exploiting low-dimensional structures of data and compressible learning dynamics in the model weights. In the context of low-rank matrix recovery via deep overparameterized factorizations, we discover an interesting phenomenon that for each weight matrix, \emph{the learning dynamics only happen within an approximately invariant low-dimensional subspace} throughout all iterations. We rigorously prove this for deep matrix factorization, which also allows us to compress the number of training parameters significantly when dealing with deep matrix completion. Consequently, we can construct and train a nearly equivalent, yet much smaller, compressed factorization without sacrificing the advantages of its overparameterized counterpart.

Interestingly, we empirically find that the above phenomenon can also be observed when employing deep overparameterized weight updates for fine-tuning language models; see \Cref{fig:low_rank_subspace} for an illustration. Therefore, we can adapt our idea of compressing deep matrix factorization to improve language model fine-tuning. For fine-tuning large-scale pretrained language models, recently low-rank adaptation (LoRA) stands out as the most commonly-used technique due to its effectiveness and efficiency \citep{hu2021lora}. The basic idea of LoRA is to freeze the pretrained weights and adapt each one to new tasks by adding and optimizing an update in the form of a two-layer low-rank decomposition. Nonetheless, in practical scenarios, selecting the optimal rank of the decomposition can pose a significant challenge. If the rank is not chosen properly, it may lead to overfitting, particularly when we overestimate the rank or when there is limited downstream data available.

We deal with this drawback of LoRA by employing a deep (three-layer) overparameterized factorization for the trainable update, which is constructed and optimized via the compression technique used for deep matrix completion. As such, our new method, which we term as \emph{Deep LoRA}, enjoys notable advantages over the original LoRA method, namely (i) \textbf{less overfitting} by exploiting depth, and (ii) \textbf{fewer hyperparameters} without rank $r$ and scale $\alpha$ having to be carefully tuned across all layers, all while having a comparable parameter efficiency due to compression.

\paragraph{Contributions.} We summarize our contributions below.
\begin{itemize}[leftmargin=*]
  \item \textbf{Practical contributions.} We develop efficient compression methods by exploring compressible learning dynamics in overparameterized factorizations. Our method enjoys the benefits of overparameterization while significantly improving its efficiency. We demonstrate the effectiveness not only on deep matrix completion, but also for improving LoRA for language model fine-tuning.
 \item \textbf{Theoretical contributions.} Our methods are inspired by our theoretical results for deep matrix factorization. Mathematically, we rigorously prove the existence of invariant low-dimensional subspaces throughout gradient descent for each weight matrix, and show how they can constructed in practice. 
\end{itemize}

\paragraph{Related Works}
There is a great deal of literature on implicit regularization in the setting of matrix factorization/linear networks \citep{neyshabur2015search,gunasekar2017implicit,arora2019implicit,moroshko2020implicit,timor2023implicit,ji2019gradient,gidel2019implicit,you2020robust,liu2022robust}, as well as low-rank learning in deep networks \citep{jaderberg2014speeding,sainath2013low,denil2013predicting,khodak2020initialization,oymak2019generalization,pmlr-v238-min-kwon24a,tarzanagh2023transformers}. Similarly, there is an abundance of work discussing the benefits of overparameterization \citep{du2019width,arora2018optimization,allen2019convergence,arpit2019benefits}.

\section{Warm-up Study: Deep Matrix Factorization}
\label{sec:theory}
Towards gaining theoretical insights into the phenomena in \Cref{fig:low_rank_subspace}, we first build some intuition based on the problem of deep matrix factorization. Under simplified settings, we rigorously unveil the emergence of low-dimensionality and compressibility in gradient descent learning dynamics.

\setlength{\belowdisplayskip}{4pt} 
\setlength{\abovedisplayskip}{4pt} 

\subsection{Basic Setup}\label{subsec:setup}
Given a low-rank matrix $\bm \Phi \in \R^{d_x \times d_y}$ with $\mbox{rank}(\bm \Phi) = r^*$, we approximate the matrix $\bm \Phi$ by an $L$-layer deep overparameterized factorization  
\begin{equation}
    f(\bm \Theta) := \mW_L \mW_{L-1} \cdots \mW_2 \mW_1 = \mW_{L:1}, \label{eqn:deep-matrix}
\end{equation}
where $\bm \Theta = (\mW_l)_{l=1}^L$ are the parameters with weights $\mW_l \in \mathbb R^{d_l \times d_{l-1}}$ for $l \in [L]$. We consider the case where the weights are all square $d_0 = d_1 = \dots = d_L = d$, and learn the parameters $\bm \Theta$ by solving
\begin{equation}\label{eq:l2_loss}
    \min_{\bm \Theta} \ell(\bm \Theta) = \frac{1}{2}\|f(\bm \Theta) - \bm \Phi\|_F^2
\end{equation}
via gradient descent (GD) from scaled \emph{orthogonal} initialization, i.e., we initialize parameters $\bm \Theta(0)$ such that 
\begin{equation}\label{eq:init}
    \mW_l(0) \mW_l(0)^\top = \mW_l(0)^\top \mW_l(0) = \epsilon_l^2 \bm I_d, \; l \in [L]
\end{equation}
where $\epsilon_l > 0$. We assume this for ease of analysis, and believe that our results could hold for arbitrary \textit{small} initialization.
For each weight matrix, the GD iterations can be written as
\begin{equation}\label{eq:gd}
    \mW_l(t+1) = (1 - \eta \lambda) \mW_l(t) - \eta \nabla_{\mW_l} \ell(\bm \Theta(t)), \; l \in [L]
\end{equation}
for $t = 0, 1, 2, \dots$,
where $\eta > 0$ is the learning rate and $\lambda \geq 0$ is an optional weight decay parameter.


\subsection{Main Theorem}\label{subsec:main-thm}
We show that learning only occurs within an invariant low-dimensional subspace of the weight matrices, whose dimensionality depends on $\mbox{rank}(\bm \Phi)$.

\begin{figure}
\centering
\vspace{-4.5em}
\includegraphics[width=\linewidth]{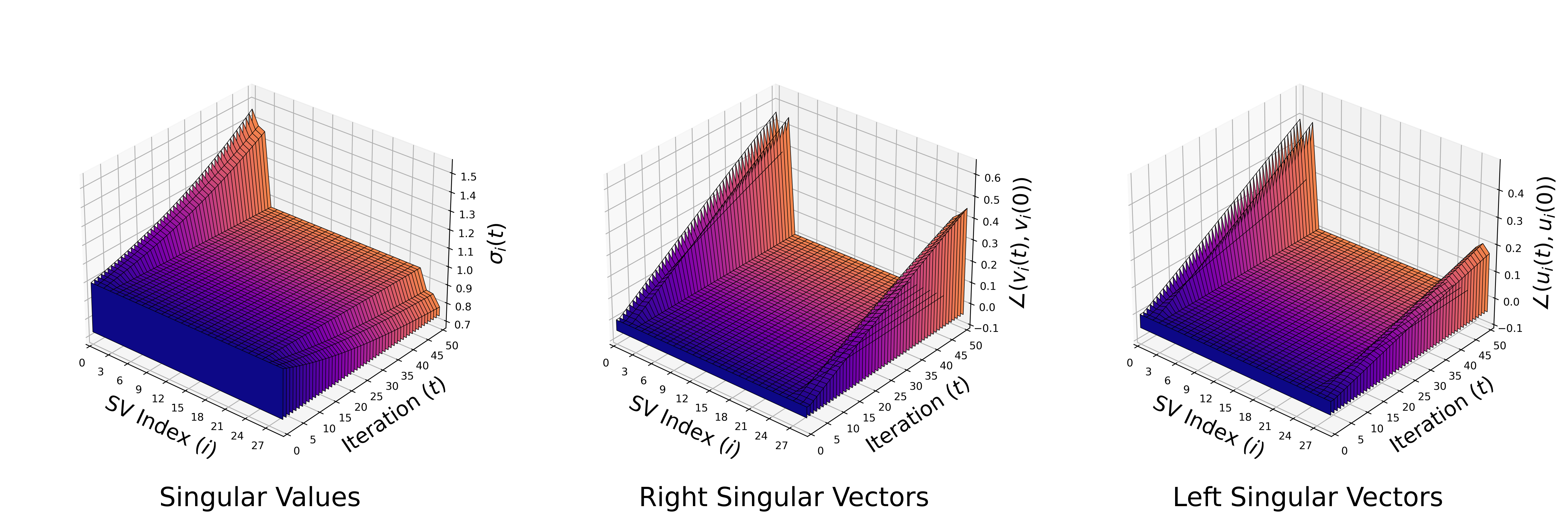}
\vspace{-1em}
\caption{\textbf{Evolution of SVD of weight matrices.} We visualize the SVD dynamics of the first layer weight matrix of an $L=3$ layer deep matrix factorization for a random matrix with $d = 30$, $r^*=3$, $\epsilon_l = 1$ throughout GD without weight decay. \textit{Left}: Magnitude of the $i$-th singular value $\sigma_i(t)$ at iteration $t$. \textit{Middle}: Angle $\angle(\bm v_i(t), \bm v_i(0))$ between the $i$-th right singular vector at iteration $t$ and initialization. \textit{Right}: Angle $\angle(\bm u_i(t), \bm u_i(0))$ between the $i$-th left singular vector at iteration $t$ and initialization.}
\label{fig:svd}
\end{figure}

\begin{theorem}\label{thm:1} Let $\mW_l(t)$ satisfy the initialization scheme \eqref{eq:init} and updates \eqref{eq:gd}, and suppose $\bm \Phi \in \R^{d \times d}$ is at most rank $r$ and let $m := d - 2r > 0$. Then there exist orthogonal matrices $(\bm U_l)_{l=1}^L \subset \mathcal O^{d\times d}$ and $(\bm V_l)_{l=1}^L \subset \mathcal O^{d\times d}$ (depending only on $\bm \Theta(0)$ and $\bm \Phi$) satisfying $\bm V_{l+1} = \bm U_l$ for $l \in [L-1]$, such that $\mW_l(t)$ admits the decomposition
\begin{equation}\label{eq:weight_structures}
    \mW_l(t) = \bm U_l
    \begin{bmatrix}
        \widetilde{\mW}_l(t) & \bm{0} \\
        \bm{0} & \rho_l(t) \bm{I}_m
    \end{bmatrix} 
    \bm V_l^\top
\end{equation}
for all $l \in [L]$ and $t \geq 0$, where $\widetilde{\mW}_l(t) \in \R^{2r \times 2r}$ with $\widetilde{\mW}_l(0) = \epsilon_l \bm I_{2r}$, and
\begin{equation}\label{eq:rho}
    \rho_l(t) = \rho_l(t-1)\cdot (1 - \eta \lambda - \eta \cdot \prod_{k\neq l} \rho_k^2(t-1))
\end{equation}
for all $l \in [L]$ and $t \geq 1$ with $\rho_l(0) = \epsilon_l$.
\end{theorem}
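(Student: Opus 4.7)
The plan is to construct the bases $(\bm U_l, \bm V_l)_{l=1}^L$ explicitly from $\bm \Theta(0)$ and $\bm \Phi$, and then verify the decomposition \eqref{eq:weight_structures} by induction on $t$. The orthogonal initialization \eqref{eq:init} gives $\mW_l(0) = \epsilon_l \bm R_l$ for some $\bm R_l \in \mathcal{O}^{d \times d}$, so demanding \eqref{eq:weight_structures} at $t=0$ with $\widetilde{\mW}_l(0) = \epsilon_l \bm I_{2r}$ and $\rho_l(0) = \epsilon_l$ forces $\bm U_l \bm V_l^\top = \bm R_l$. Combined with the coupling $\bm U_l = \bm V_{l+1}$, this yields the recursion $\bm V_{l+1} = \bm R_l \bm V_l$, so the entire family $(\bm V_l)_{l=1}^{L+1}$ (with $\bm V_{L+1} := \bm U_L = \bm R_L \cdots \bm R_1 \bm V_1$) is determined by the single free choice of $\bm V_1$.

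The key step is picking $\bm V_1$ so that $\bm \Phi$ itself has a compatible block form $\bm V_{L+1}\, \mathrm{diag}(\widetilde{\bm \Phi}, \bm 0)\, \bm V_1^\top$ with $\widetilde{\bm \Phi} \in \R^{2r\times 2r}$. Writing the rank-$r$ SVD $\bm \Phi = \bm P \bm \Lambda \bm Q^\top$, this requires the first $2r$ columns of $\bm V_1$ to span a subspace containing both the row space $\mathrm{col}(\bm Q)$ and the pull-back $(\bm R_L \cdots \bm R_1)^\top \mathrm{col}(\bm P)$ of the column space under the initial orthogonal product. Each of these is at most $r$-dimensional, so their sum has dimension at most $2r$, and we may choose an orthonormal basis for any $2r$-dimensional superspace as the first $2r$ columns of $\bm V_1$, completing arbitrarily.

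With the bases fixed, the induction on $t$ is a direct propagation of the block structure. Assuming \eqref{eq:weight_structures} holds for all $l$ at time $t$, the coupling $\bm V_{l+1} = \bm U_l$ telescopes the product into $\mW_{L:l+1}(t) = \bm V_{L+1}\, \mathrm{diag}\paren{\widetilde{\mW}_{L:l+1}(t),\, \prod_{k>l}\rho_k(t)\, \bm I_m}\, \bm V_{l+1}^\top$ and analogously for $\mW_{l-1:1}(t)$, so that the residual $\mW_{L:1}(t) - \bm \Phi$ is block-diagonal in the $(\bm V_{L+1}, \bm V_1)$ basis with upper block $\widetilde{\mW}_{L:1}(t) - \widetilde{\bm \Phi}$ and lower block $\prod_k \rho_k(t)\, \bm I_m$. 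Substituting these expressions into $\nabla_{\mW_l}\ell = \mW_{L:l+1}^\top(\mW_{L:1} - \bm \Phi)\mW_{l-1:1}^\top$ collapses every intermediate $\bm V_k \bm V_k^\top = \bm I$, producing a block-diagonal gradient in the $(\bm V_{l+1}, \bm V_l)$ basis whose lower block is $\rho_l(t)\prod_{k\neq l}\rho_k^2(t)\, \bm I_m$. Plugging into \eqref{eq:gd} preserves the decomposition, yielding the stated recurrence \eqref{eq:rho} for $\rho_l$ and an analogous gradient update for $\widetilde{\mW}_l$ in the active $2r \times 2r$ block.

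The main obstacle is the geometric setup in the second paragraph: one must simultaneously fit both the right-singular space of $\bm \Phi$ and the pull-back of its left-singular space under the initial orthogonal product into a single $2r$-dimensional subspace. This is exactly why the invariant inactive block has size $m = d - 2r$ rather than $d - r$, and why the hypothesis $m > 0$ appears. Once this alignment is in place, the remainder of the proof is essentially bookkeeping that the block-orthogonal structure is preserved by each gradient step.
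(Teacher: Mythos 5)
Your proof is correct and builds the same invariant structure as the paper's, but with cleaner bookkeeping. The paper proves an auxiliary Lemma~\ref{lem:1} that propagates four per-vector invariants $\mathcal{A}(t)$--$\mathcal{D}(t)$ (on $\mW_l(t)\bm v_i^{(l)}$, $\mW_l^\top(t)\bm u_i^{(l)}$, $\bm\Phi^\top\mW_{L:l+1}(t)\bm u_i^{(l)}$, and $\bm\Phi\mW_{l-1:1}^\top(t)\bm v_i^{(l)}$) through the induction on $t$, and only at the end assembles them into the block form \eqref{eq:weight_structures}; you instead induct directly on the block decomposition, exploiting the telescoping identity $\mW_{j:i}(t) = \bm V_{j+1}\,\mathrm{diag}\bigl(\widetilde{\mW}_{j:i}(t),\ \prod_{k=i}^{j}\rho_k(t)\,\bm I_m\bigr)\,\bm V_i^\top$, which makes conditions $\mathcal{C}$ and $\mathcal{D}$ redundant once you observe that $\bm\Phi$ itself is block-diagonal in the $(\bm V_{L+1},\bm V_1)$ coordinates. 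The subspace you construct from the SVD of $\bm\Phi$ (orthogonal complement of $\mathrm{col}(\bm Q)+(\bm R_L\cdots\bm R_1)^\top\mathrm{col}(\bm P)$) is identical to the paper's $\mathcal{S}=\mathcal{N}(\bm\Psi)\cap\mathcal{N}(\bm\Psi^\top\mW_1(0))$ with $\bm\Psi = \mW_{L:2}^\top(0)\bm\Phi$, since $\mW_{L:2}(0)$ and $\mW_{L:1}(0)$ are invertible; the paper phrases it via the gradient at initialization because that form remains computable in the compression scheme of \Cref{subsec:compression}, where the SVD of $\bm\Phi$ is not directly available, but for the theorem itself the two are interchangeable.
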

In the following, we discuss several implications of our result and its relationship to previous work. 

\begin{itemize}[leftmargin=*]
\item \textbf{SVD dynamics of weight matrices.} The decomposition \eqref{eq:weight_structures} is closely related to the singular value decomposition (SVD) of $\bm W_l(t)$. Specifically, let $\bm U_l = [\bm U_{l,1}\ \bm U_{l,2}]$, $\bm V_l = [\bm V_{l,1}\ \bm V_{l,2}]$, where $\bm U_{l,1},\bm V_{l,1} \in \mathcal O^{d\times 2r}$, $\bm U_{l,2},\bm V_{l,2} \in \mathcal O^{d\times (d-2r)}$. Let $\widetilde{\bm W}_l(t) = \widetilde{\bm U}_l(t)\widetilde{\bm \Sigma}_l(t)\widetilde{\bm V}_l^\top(t)$ be an SVD of $\widetilde{\bm W}_l(t)$, where $\widetilde{\bm U}_l(t),\widetilde{\bm V}_l(t) \in \mathcal O^{2r}$ and $\widetilde{\bm \Sigma}_l(t) \in \R^{2r\times 2r}$ is a diagonal matrix. Then, by \eqref{eq:weight_structures} we can write $\bm W_l(t)$ as
\begin{align*}
\bm W_l(t) = 
\begin{bmatrix}
  \bm U_{l,1}\widetilde{\bm U}_l(t)  &  \bm U_{l,2}
\end{bmatrix} \begin{bmatrix}
        \widetilde{\bm \Sigma}_l(t) & \bm{0} \\
        \bm{0} & \rho(t) \bm{I}_m
    \end{bmatrix}  
\begin{bmatrix}
  \bm V_{l,1} \widetilde{\bm V}_{l}(t)  &  \bm V_{l,2}
\end{bmatrix}^\top
\end{align*}
which is essentially an SVD of $\bm{W}_l(t)$ (besides the ordering of singular values). According to this, we can verify that $\rho(t)$ is a (repeated) singular value undergoing minimal changes across iterations illustrated in \Cref{fig:svd} (left). Additionally, these repeated singular values correspond to \textit{invariant} subspaces $\bm U_{l, 2}, \bm V_{l, 2}$ that are stationary throughout GD, as seen in \Cref{fig:svd} (middle and right).

\item \textbf{Low-rank bias.} From \eqref{eq:rho}, we can show under mild assumptions that the GD trajectory for each weight matrix either remains or tends towards a solution with rank at most $2r$. This is true whether we employ implicit or explicit regularization. Indeed, if we use small initialization $\epsilon_l \approx 0$ with no weight decay $\lambda = 0$, then the fact that $\rho_l$ is a decreasing sequence (w.r.t. iteration) implies that the approximate rank of $\mW_l(t)$ can be no more than $2r$ throughout the entire trajectory. On the other hand, if we use weight decay with $\lambda > 0$, then we have $\rho_l(t) \rightarrow 0$ as $t \rightarrow \infty$. This forces $\mW_l(t)$ towards a solution of rank at most $2r$ when the training converges. See \Cref{app:learning_rate} for a formal statement and proof. This result is consistent with previous findings on low-rank and simplicity bias in deep networks \citep{huh2022low,galanti2022sgd,li2020towards,chou2024gradient}.
\item \textbf{Comparison to prior arts.} In contrast to existing work studying implicit bias of GD towards low-rank solutions \citep{gunasekar2017implicit,arora2019implicit}, our result explicitly shows how GD finds these solutions. Moreover, unlike previous work on implicit bias \citep{min2021explicit,gissin2019implicit,arora2019implicit,vardi2021implicit}, we also examine the effect of weight decay, which is commonly employed during the training of deep networks. Our analysis is distinct from that of \citep{saxe2014exact,saxe2019mathematical}, which studied continuous time dynamics under the special (separable) setting $\mW_{L:1}(0) = \bm U \bm V^\top$ with $\bm \Phi = \bm U \bm \Sigma \bm V^\top$. In comparison, our result applies to discrete time dynamics and holds for initialization that is agnostic to the target matrix. It should also be noted that our result does not depend on \emph{balanced} initialization like those in \citep{arora2018convergence}, as the initialization scale $\epsilon_l$ for each layer can be arbitrarily different from one another. 
\end{itemize}

\paragraph{A sketch of analysis.} We now provide a rough sketch for the beginning of the proof of \Cref{thm:1} in the special case of small initialization $\epsilon_l = \epsilon \approx 0$ for all $l \in [L]$ and $\lambda = 0$, highlighting the construction of the invariant subspace at initialization. The full proof can be found in \Cref{app:proof_thm1}.

\begin{proof}[Proof sketch]
Since $\epsilon^L \approx 0$, from the gradient of $\ell(\bm \Theta)$ (see \Cref{app:proofs}), we have 
\begin{equation}
\label{eq:g1}
    \bm G_1 := \nabla_{\bm W_1} \ell(\bm \Theta(0)) \approx - \mW_{L:2}^\top(0) \bm \Phi
\end{equation}
implying that the rank of $\bm G_1$ is (approximately) at most $r$. Now consider the subspace $\mathcal{S} = \mathcal{N}(\bm G_1) \cap \mathcal{N}(\bm G_1^\top \mW_1(0))$, where we have $\dim\,\mathcal{S} \geq d-2r$. Then, there exist orthonormal sets $\{\bm v_i\}_{i=1}^{d-2r}$ and $\{\bm u_i\}_{i=1}^{d-2r}$ which satisfy $\bm G_1 \bm v_i = \bm 0$, $\bm u_i \propto \bm W_1(0) \bm v_i$ and therefore
\begin{equation*}
    \bm G_1^\top \bm u_i \propto \bm G_1^\top \bm W_1(0) \bm v_i = \bm 0
\end{equation*}
so along with the orthogonality of $\bm W_1(0)$, the pairs $(\bm u_i, \bm v_i)$ form singular vector pairs of both $\bm W_1(0)$ and $\bm W_1(1)$ simultaneously as they remain unchanged by the gradient update $\bm G_1$, giving the last $d-2r$ columns of $\bm V_1$ and $\bm U_1$ respectively. To see that we can take $\bm V_2 = \bm U_1$, for instance, we note that
\begin{align*}
    \nabla_{\mW_2} \ell(\bm \Theta(0)) \cdot \bm u_i
    &\approx - \mW_{L:3}^\top(0) \bm \Phi \mW_1^\top(0) \bm u_i \\
    &\propto \mW_{L:3}^\top(0) \bm \Phi \bm v_i = \bm 0
\end{align*}
by $\eqref{eq:g1}$, showing that $\bm u_i$ are invariant under gradient updates in the second layer.
\end{proof}

\subsection{Compression of Overparameterized Factorization}\label{subsec:compression}

We now show that, as a consequence of \Cref{thm:1} and the proof sketch, we can run GD on dramatically \emph{fewer} parameters to achieve a near \emph{identical} end-to-end trajectory as the original (full-width) factorization; see \Cref{fig:equiv_traj_l2}.

\paragraph{Constructing the ``equivalent'' compressed factorization.} More specifically, given that $\bm \Phi$ is at most rank $r$ and $d-2r > 0$, from \Cref{thm:1} we observe that
 \begin{align}
     \mW_{L:1}(t) & = \bm U_{L, 1} \widetilde{\mW}_{L:1}(t)  \bm V_{1, 1}^\top + \paren{\prod_{l=1}^L\rho_l(t)}\cdot \bm U_{L,2} \bm V_{1,2}^\top \nonumber  \\
     & \approx \underbrace{\bm U_{L, 1} \widetilde{\mW}_{L:1}(t)  \bm V_{1, 1}^\top}_{ =:  f_C( \widetilde{\bm \Theta}, \bm U_{L, 1}, \bm V_{1, 1})}, \quad \forall\;t=1,2,\dots, \label{eqn:approx}
 \end{align}
 when we use initialization of small scale (i.e., $(\epsilon_l)_{l=1}^L$ are small). Here, $\widetilde{\mW}_{L:1} =  \widetilde{\mW}_{L} \widetilde{\mW}_{L-1} \cdots \widetilde{\mW}_1 $ with compressed weights $\widetilde{\mW}_l \in \R^{2r\times 2r}$. Correspondingly, $f_C( \widetilde{\bm \Theta}, \bm U_{L, 1}, \bm V_{1, 1})$ denotes the compressed function with compressed parameters $\widetilde{\bm \Theta} = (\widetilde{\mW}_l)_{l=1}^L$. As such, we can expect that solving
 \begin{equation}\label{eq:compression_loss}
   \min_{\widetilde{\bm \Theta} } \ell_C (\widetilde{\bm \Theta}) = \frac{1}{2}\| f_C (\widetilde{\bm \Theta}, \bm U_{L, 1}, \bm V_{1, 1}) - \bm \Phi\|_F^2 
\end{equation}
will approximately give the same solution as \eqref{eq:l2_loss}.

\begin{figure}[t]
\centering
\vspace{-4.5em}
\includegraphics[width=0.7\linewidth]{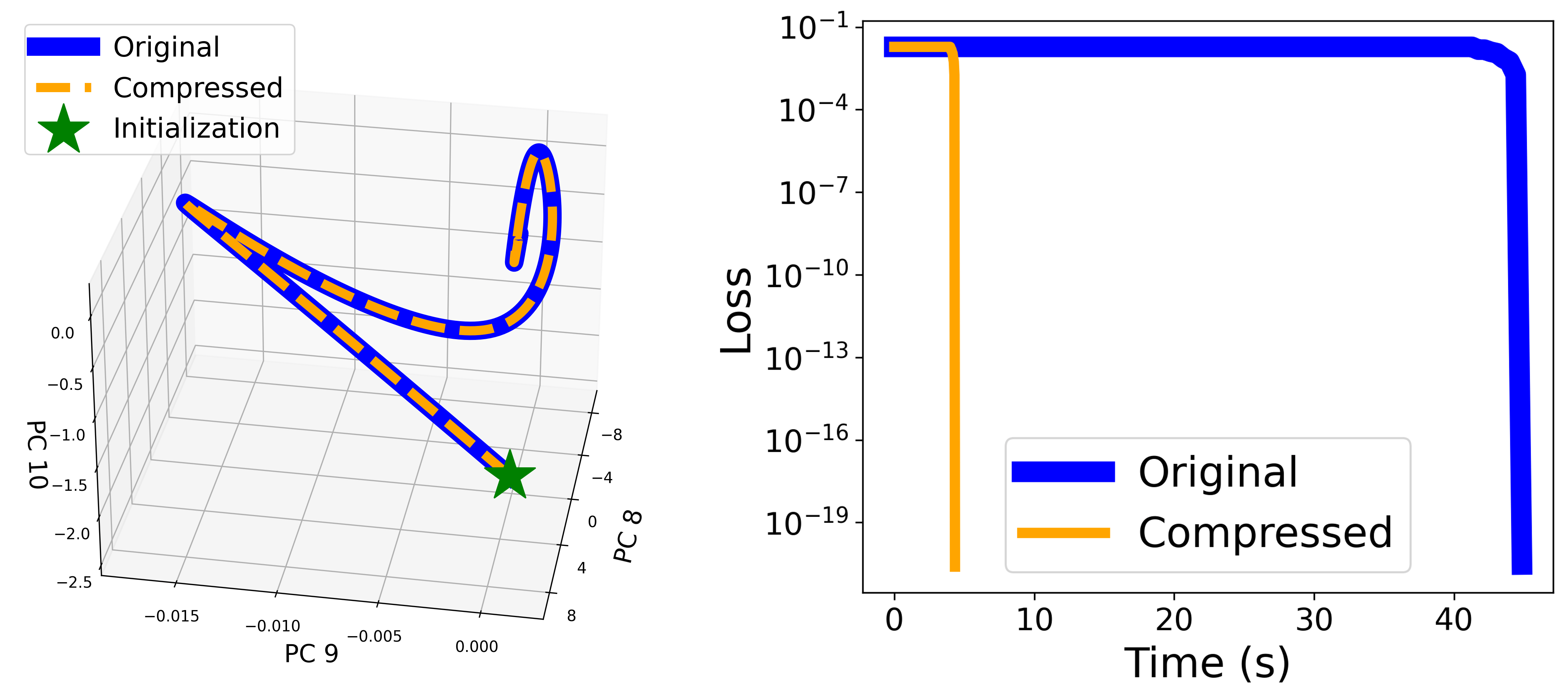}
\vspace{-1em}
\caption{\textbf{Network compression for deep matrix factorization.} Comparison of trajectories for optimizing the original problem \eqref{eq:l2_loss} vs. the compressed problem \eqref{eq:compression_loss} with $L=3$, $d=1000$, $r = r^* = 5$, and $\epsilon_l = 10^{-3}$. \textit{Left}: Principal components of end-to-end GD trajectories. \textit{Right}: Training loss vs. wall-time comparison.}
\label{fig:equiv_traj_l2}
\end{figure}
 
\paragraph{Constructing the factors $(\bm U_l, \bm V_l)_{l=1}^L$.} 
As \Cref{thm:1} only showed the existence of $(\bm U_l, \bm V_l)_{l=1}^L$, to solve \eqref{eq:compression_loss} via GD, we need a practical recipe for constructing $(\bm U_l, \bm V_l)_{l=1}^L$ efficiently \textit{at initialization} of small scale $\epsilon_l$. This can be achieved based upon our proof sketch in \Cref{subsec:main-thm}: we compute $\bm G_1 = \nabla_{\bm W_1} \ell(\bm \Theta(0)) \in \R^{d\times d}$, find an orthonormal set $\{\bm v_i\}_{i=1}^{d-2r}$ contained in $\mathcal{S} = \mathcal{N}(\bm G_1) \cap \mathcal{N}(\bm G_1^\top \mW_1(0))$, and complete to an orthonormal basis to yield $\bm V_1$. The remaining $\bm U_l, \bm V_l$ can then be iteratively constructed via
\begin{equation*}
    \bm U_l = \bm W_l(0) \bm V_l / \epsilon_l, \; \bm V_{l+1} = \bm U_l,\; l=1,\cdots,L-1,
\end{equation*}
and $\bm U_L = \bm W_L(0) \bm V_L / \epsilon_L$. Finally, we take the first $2r$ columns of $\bm U_L$ and $\bm V_1$ to yield $\bm U_{L, 1}$ and $\bm V_{1, 1}$, respectively. It should be noted that these compressed factors are related to, yet distinct from, spectral initialization, which is well-studied in the literature \citep{chi2019nonconvex,khodak2020initialization,stoger2021small}. Since $\bm U_{l,1}, \bm V_{l,1}$ are constructed via orthogonal complements to nullspaces involving the gradient, these directions do indeed correlate with the top singular subspaces of $\bm \Phi$ in the deep matrix factorization case (although we do not use the singular value information). On the other hand, our approach is more general through the lens of compression, as it can be applied to a given deep overparameterized factorization trained on an \emph{arbitrary} loss.

\paragraph{Optimization, complexity, and approximation error.}
In summary, we can approximately solve the original problem by solving 
\eqref{eq:compression_loss} via GD for the compressed parameters $\widetilde{\bm \Theta} = (\widetilde{\mW}_l)_{l=1}^L$, starting from small initialization ($\epsilon_l \approx 0$). The factors $\bm U_{L, 1}, \bm V_{1, 1}$ can be efficiently constructed based upon an iterative scheme that we discussed above from the initial weights. 

 Comparing the parameter counts of the compressed $f_C( \widetilde{\bm \Theta}, \bm U_{L, 1}, \bm V_{1, 1})$ vs. the original $ f(\bm \Theta)$, we only need to optimize $4L\cdot r^2$ parameters compared to the original $L\cdot d^2$. Since $r \ll d$, our approach leads to significant improvement in efficiency during GD; see \Cref{fig:equiv_traj_l2} (right). On the other hand, compression requires some additional computation to construct the factors $\bm U_{L, 1}$ and $\bm V_{1, 1}$ prior to training, which involves taking a gradient of the first weight in the original factorization followed by an SVD or QR decomposition to compute an orthonormal basis for $\mathcal{S}$. While this requires an additional $O(d^3)$ compute, this has the same complexity as a single iteration of GD for the original factorization and is therefore a negligible overhead when comparing the two.

Finally, the following result demonstrates that our compression method can achieve an almost identical end-to-end trajectory when we use small initializations; see \Cref{fig:equiv_traj_l2} (left).

\begin{proposition}\label{prop:1}
For $r$ such that $m := d-2r > 0$, if we run GD on the compressed weights $\widetilde{\bm \Theta}$ as described above for the loss \eqref{eq:compression_loss}, we have 
\begin{equation*}
    \left\| f(\bm \Theta(t)) - f_C(\widetilde{\bm \Theta}(t), \bm U_{L, 1}, \bm V_{1, 1}) \right\|_F^2 \leq m \cdot \prod_{l=1}^L \epsilon_l^2
\end{equation*}
for any iterate $t = 0, 1, 2, \cdots$. Here, $\epsilon_l$ is the initialization scale for the weight $\mW_l(0)$.
\end{proposition}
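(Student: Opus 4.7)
The plan is to reduce the bound directly to a Frobenius-norm computation on the ``invariant'' bottom-right block of the full iterate, using the block decomposition of Theorem~\ref{thm:1}. First, I would invoke Theorem~\ref{thm:1} to write each $\mW_l(t)$ in the block-diagonal form \eqref{eq:weight_structures} with the gluing relation $\bm V_{l+1} = \bm U_l$. The telescoping $\bm V_l^\top \bm U_{l-1} = \bm I_d$ then collapses the end-to-end product into
\begin{equation*}
    f(\bm \Theta(t)) \;=\; \mW_{L:1}(t) \;=\; \bm U_L \begin{bmatrix} \widetilde{\mW}_{L:1}(t) & \bm 0 \\ \bm 0 & \prod_{l=1}^L \rho_l(t)\,\bm I_m \end{bmatrix} \bm V_1^\top.
\end{equation*}

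Second, I would argue that the top-left block $\widetilde{\mW}_l(t)$ appearing in Theorem~\ref{thm:1} is literally the iterate produced by running GD on the compressed loss \eqref{eq:compression_loss} from initialization $\widetilde{\mW}_l(0) = \epsilon_l \bm I_{2r}$. The cleanest way is to change variables: plug \eqref{eq:weight_structures} into the original GD update \eqref{eq:gd} and use the construction of the subspace (from the proof sketch of Theorem~\ref{thm:1}) to show that $\bm U_L^\top \bm \Phi \bm V_1$ is supported on the top-left $2r \times 2r$ block. Then the gradient $\nabla_{\mW_l}\ell$ itself becomes block-diagonal in the $(\bm U_l, \bm V_l)$ bases, its top-left block equals $\nabla_{\widetilde{\mW}_l}\ell_C(\widetilde{\bm \Theta}(t))$, and the bottom-right block is the scalar multiple $\prod_{k\ne l}\rho_k^2(t)\,\rho_l(t)\bm I_m$ that produces exactly the scalar recursion \eqref{eq:rho}. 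Consequently, GD on $\widetilde{\bm \Theta}$ reproduces the top-left block, while the bottom-right evolves as the decoupled scalar $\rho_l(t)$.

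Third, subtracting $f_C(\widetilde{\bm \Theta}(t),\bm U_{L,1},\bm V_{1,1}) = \bm U_{L,1}\widetilde{\mW}_{L:1}(t)\bm V_{1,1}^\top$ from $f(\bm \Theta(t))$ leaves only the bottom-right contribution, giving
\begin{equation*}
    f(\bm \Theta(t)) - f_C(\widetilde{\bm \Theta}(t),\bm U_{L,1},\bm V_{1,1}) \;=\; \Bigl(\prod_{l=1}^L \rho_l(t)\Bigr)\,\bm U_{L,2}\bm V_{1,2}^\top.
\end{equation*}
Because $\bm U_{L,2}$ and $\bm V_{1,2}$ each have orthonormal columns, $\|\bm U_{L,2}\bm V_{1,2}^\top\|_F^2 = m$, so it remains to show $\bigl(\prod_{l=1}^L \rho_l(t)\bigr)^2 \leq \prod_{l=1}^L \epsilon_l^2$.

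Fourth, I would establish monotone contraction of each $\rho_l$ from the recursion \eqref{eq:rho}. Starting from $\rho_l(0) = \epsilon_l$, an induction on $t$ combined with the standard learning-rate condition (which ensures the multiplier $1-\eta\lambda - \eta\prod_{k\ne l}\rho_k^2(t-1)$ lies in $[0,1]$) yields $0 \le \rho_l(t) \le \epsilon_l$ for all $t$, hence $\prod_l \rho_l(t)^2 \le \prod_l \epsilon_l^2$. Combining this with the norm computation gives the claimed bound $m\cdot\prod_{l=1}^L \epsilon_l^2$. I expect the main obstacle to be the identification step in the second paragraph: carefully checking that the induced gradient really is block-diagonal in the fixed bases $(\bm U_l,\bm V_l)$ at every iteration, so that the compressed GD trajectory exactly matches the top-left block — this is where all the structural work of Theorem~\ref{thm:1} is leveraged, and any sloppiness would break the equality rather than the inequality.
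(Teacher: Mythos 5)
Your proposal follows essentially the same route as the paper's proof. Both arguments reduce to the same two ingredients: (i) the block factorization from Theorem~\ref{thm:1} applied to the end-to-end product (the paper's equation \eqref{eq:e2e}), so that $f(\bm\Theta(t))$ splits into a top-left block $\bm U_{L,1}\widetilde{\mW}_{L:1}(t)\bm V_{1,1}^\top$ plus a rank-$m$ tail $\bigl(\prod_l\rho_l(t)\bigr)\bm U_{L,2}\bm V_{1,2}^\top$; and (ii) the identification $\widehat{\mW}_l(t)=\widetilde{\mW}_l(t)$ between the compressed GD iterates and the top-left blocks, proven by induction via a gradient-matching equality. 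Your presentation of step (ii) is phrased as ``the full gradient is block-diagonal in the $(\bm U_l,\bm V_l)$ bases with top-left block equal to the compressed gradient,'' using $\bm\Phi\bm V_{1,2}=\bm 0$ and $\bm U_{L,2}^\top\bm\Phi=\bm 0$; the paper instead directly verifies $\nabla_{\widehat{\mW}_l}\widehat\ell(\widehat{\bm\Theta}(t))=\bm U_{l,1}^\top\nabla_{\mW_l}\ell(\bm\Theta(t))\bm V_{l,1}$ by unrolling both sides. These are the same computation seen from two angles, and you correctly flag it as the load-bearing step. One small thing you are more explicit about than the paper: the final inequality $\bigl(\prod_l\rho_l(t)\bigr)^2\le\prod_l\epsilon_l^2$ genuinely needs the monotone decay $\rho_l(t)\le\epsilon_l$, which holds under a mild learning-rate condition (the paper relegates this to Lemma~\ref{lem:learning_rate} and treats it as implicit in the proposition); making that dependence visible, as you do, is a point in your favor.
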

The key idea of \Cref{prop:1} is that GD is invariant under orthogonal transformations, and each factor $\widetilde{\mW}_l$ in the end-to-end factorization in \eqref{eq:compression_loss} is the result of an orthogonal transformation of $\mW_l$. Then, the approximation error $m \cdot \prod_{l=1}^L \epsilon_l^2$ is only due to the approximation we showed in \eqref{eqn:approx}. We defer the full proof to \Cref{app:proof_prop1}.

\section{Application I: Deep Matrix Completion}
\label{sec:dmc}

In this section, we show that we can generalize our method in \Cref{sec:theory} from vanilla matrix factorization to solving low-rank matrix completion problems \citep{candes2012exact,candes2010power,davenport2016overview} via compressed deep factorizations. Given a ground-truth $\bm \Phi \in \R^{d\times d}$ with rank $r^* \ll d$, the goal of low-rank matrix completion is to recover $\bm \Phi$ from only a few number of observations encoded by a mask $\bm \Omega \in \{0, 1\}^{d\times d}$.
Adopting a matrix factorization approach, we minimize the objective
\begin{equation}\label{eq:mc_loss}
    \ell_{\mathrm{mc}}(\bm \Theta) = \frac{1}{2}\|\bm \Omega \odot (f(\bm \Theta) - \bm \Phi)\|_F^2,
\end{equation}
where $f(\bm \Theta)$ is the deep overparameterized factorization introduced in \eqref{eqn:deep-matrix}. The problem simplifies to deep matrix factorization \eqref{eq:l2_loss} that we studied earlier when $\bm \Omega = \bm 1_d \bm 1_d^\top$ in the full observation case. Additionally, \eqref{eq:mc_loss} reduces to vanilla (shallow) matrix factorization when $L=2$, whose global optimality and convergence have been widely studied under various settings \citep{jain2013low,zheng2016convergence,sun2016guaranteed,ge2016matrix,bhojanapalli2016global,ge2017no,gunasekar2017implicit,li2019non,chi2019nonconvex,li2018algorithmic,soltanolkotabi2023implicit,sun2018geometric,zhang2020symmetry,ding2021rank}.

\begin{figure}[t]
  \centering
  \vspace{-4em}
  \includegraphics[width=\linewidth]{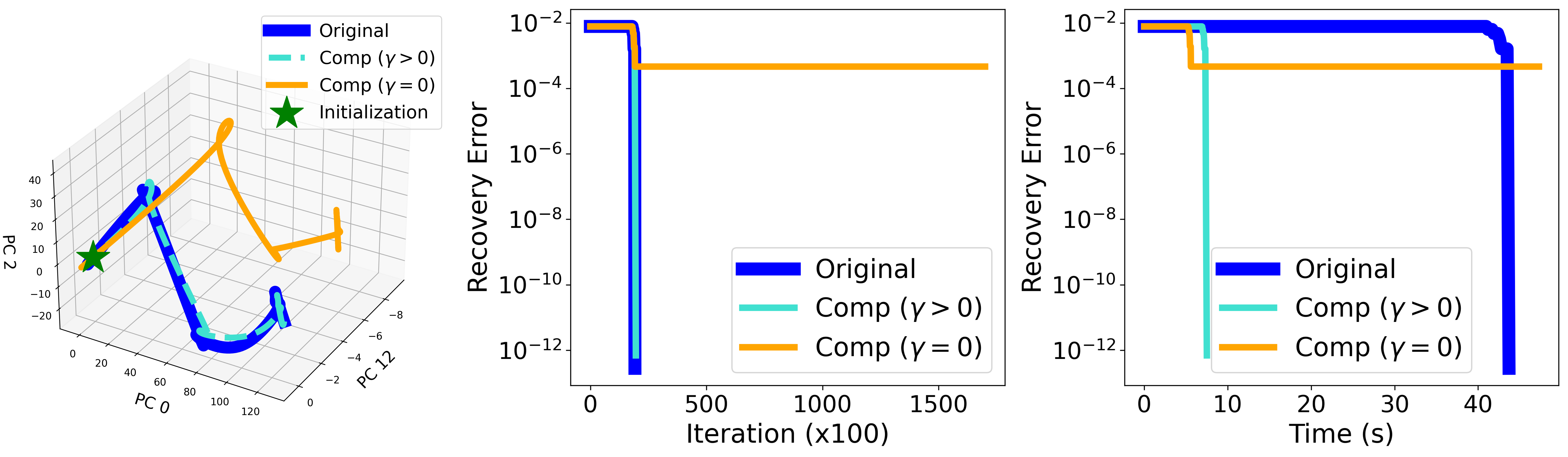}
  \caption{\textbf{Network compression for deep matrix completion.}
  Comparison of trajectories for optimizing the original problem \eqref{eq:mc_loss} vs. the compressed problem \eqref{eq:compression_loss_mc} with $\gamma$ discrepant updates ($\gamma = 0.01$) and ablating $\gamma$ ($\gamma = 0$) with $L=3$, $d=1000$, $r=r^*=5$, $\epsilon_l=10^{-3}$ and 20\% of entries observed. \textit{Left}: Principal components of end-to-end trajectories of each factorization. \textit{Middle}: Recovery error vs. iteration comparison. \textit{Right}: Recovery error vs wall-time comparison.}
  \label{fig:equiv_traj_mc}
\end{figure}

\paragraph{A double-edged sword of overparameterization.} 
 In practice, the true rank $r^*$ is not known -- instead, we assume to have an upper bound $r$ of the same order as $r^*$, i.e., $r^* \leq r \ll d$. Surprisingly,  overparameterization has
 advantages in terms of both depth $L$ and width $r$:
\begin{itemize}[leftmargin=*]
    \item \textbf{Benefits of depth: mitigating overfitting.} When $r > r^*$, it has been demonstrated \citep{arora2019implicit} that optimizing deeper factorizations (i.e., $L\geq 3$) generalize better in the low sample regime, while their shallow counterparts overfit; see \Cref{fig:depth_2_v_3} (left).
    \item \textbf{Benefits of width: improving convergence.}  On the other hand, increasing the width $r$ of the deep factorization beyond $r^*$ results in accelerated convergence in terms of iterations, see \Cref{fig:depth_2_v_3} (right). 
\end{itemize}
However, the advantages of overparameterization come with the challenges of much higher computational costs. For an $L$-layer factorization of (full)-width $d$, we require $O(L \cdot d^3)$ multiplications per iteration to evaluate gradients and need to store $O(L \cdot d^2)$ parameters, where $d$ is often very large. Using ideas from \Cref{sec:theory}, however, we can obtain the benefits of overparameterization without the extra computational costs.

\paragraph{Compression for deep matrix completion.} Given the similarity between deep matrix factorization and completion (i.e., $\bm \Omega = \bm 1_d \bm 1_d^\top$ vs arbitrary $\bm \Omega$), it seems straightforward to generalize our compression methods in \Cref{subsec:compression} to deep matrix completion. However, as shown by the orange trace in \Cref{fig:equiv_traj_mc}, direct application does not work well, as the compressed factorization's trajectory diverges from that of the original. This is because the compressed subspaces $\bm U_{L, 1}, \bm V_{1, 1} \in \R^{d \times 2\widehat{r}}$ computed at the initialization $\bm \Theta(0)$ via the gradient
\begin{equation*}
    \nabla_{\mW_1} \ell_{\mathrm{mc}}(\bm \Theta(0)) \approx - \mW_{L:2}^\top(0) [\bm \Omega \odot \bm \Phi]
\end{equation*}
can be \emph{misaligned} with the true subspace due to the perturbation by the mask $\bm \Omega$.

Nonetheless, this issue can be mitigated by slowly updating $\bm U_{L, 1}, \bm V_{1, 1}$ during training. Specifically, compared to \eqref{eq:compression_loss}, we minimize
\begin{align} \label{eq:compression_loss_mc}
    \min_{ \widetilde{\bm \Theta}, \bm U_{L,1},\bm V_{1,1} } \frac{1}{2}\|\bm \Omega \odot (f_C(\widetilde{\bm \Theta}, \bm U_{L, 1}, \bm V_{1, 1}) - \bm \Phi)\|_F^2 
\end{align}
via GD by updating $\widetilde{\bm \Theta}, \bm U_{L,1},\bm V_{1,1}$ simultaneously every iteration, with a learning rate $\eta$ on $\widetilde{\bm \Theta}$ along with a \emph{discrepant} learning rate $\gamma \eta$ on $\bm U_{L,1},\bm V_{1,1}$. Because we update $\bm U_{L,1},\bm V_{1,1}$ slower than updating $\widetilde{\bm \Theta}$, we generally choose $\gamma > 0$ to be small and tuned accordingly for the given problem.

As a result, we reduce computational costs to $O((L + d) \cdot r^2)$ multiplications per iteration for computing gradients, and $O(d \cdot r + L \cdot r^2)$ parameters. Yet still the trajectory of the deep compressed factorization ultimately aligns with that of the original, while converging roughly $5\times$ faster w.r.t. wall-time, as demonstrated in \Cref{fig:equiv_traj_mc}. Moreover, the accelerated convergence induced by the full-width trajectory results in the compressed factorization being $3\times$ faster than randomly initialized factorizations of similar width -- see \Cref{app:comp_random_dmc} for more details. 

\section{Application II: Model Fine-tuning}
\label{sec:dclora}

In this section, we show that our compression idea can be further extended to parameter-efficient fine-tuning of pretrained language models, specifically via low-rank adaptation (LoRA) \citep{hu2021lora}. In particular, inspired by our approach for deep matrix completion, we propose Deep Low-Rank Adaptation (Deep LoRA), which consistently outperforms \emph{vanilla} LoRA in the limited sample regime.

\subsection{Deep Low-Rank Adaptation (Deep LoRA)}

\paragraph{Background on LoRA.} With the ever-growing size of pretrained models and countless downstream tasks, full model fine-tuning is often computationally infeasible. Given a pretrained model whose parameters consist of a collection of dense weight matrices $\{\mW_{0k}\}_{k=1}^m \subset \R^{d\times d}$ (e.g., the query/key/value projections of a transformer \citep{vaswani2017attention}), LoRA seeks to adapt each layer to a given task by \emph{freezing} the pretrained weight $\{\mW_{0k}\}_{k=1}^m$ and optimizing an extra trainable low-rank factorization on top. In other words, the fine-tuned weight $\mW_k$ is given by
\begin{equation*}
    \mW_k = \mW_{0k} + \frac{\alpha}{r}\mW_k^{(2)} \mW_k^{(1)}
\end{equation*}
where $\alpha > 0$ is a tunable scale parameter and $\mW_k^{(2)} \in \R^{d \times r}$, $\mW_k^{(1)} \in \R^{r \times d}$ with $r \ll d$, thereby substantially reducing the number of trainable parameters during fine-tuning.

\begin{figure}[t]
\centering
\vspace{-4em}
\begin{minipage}{0.32\textwidth}
\centering
\includegraphics[width=\linewidth]{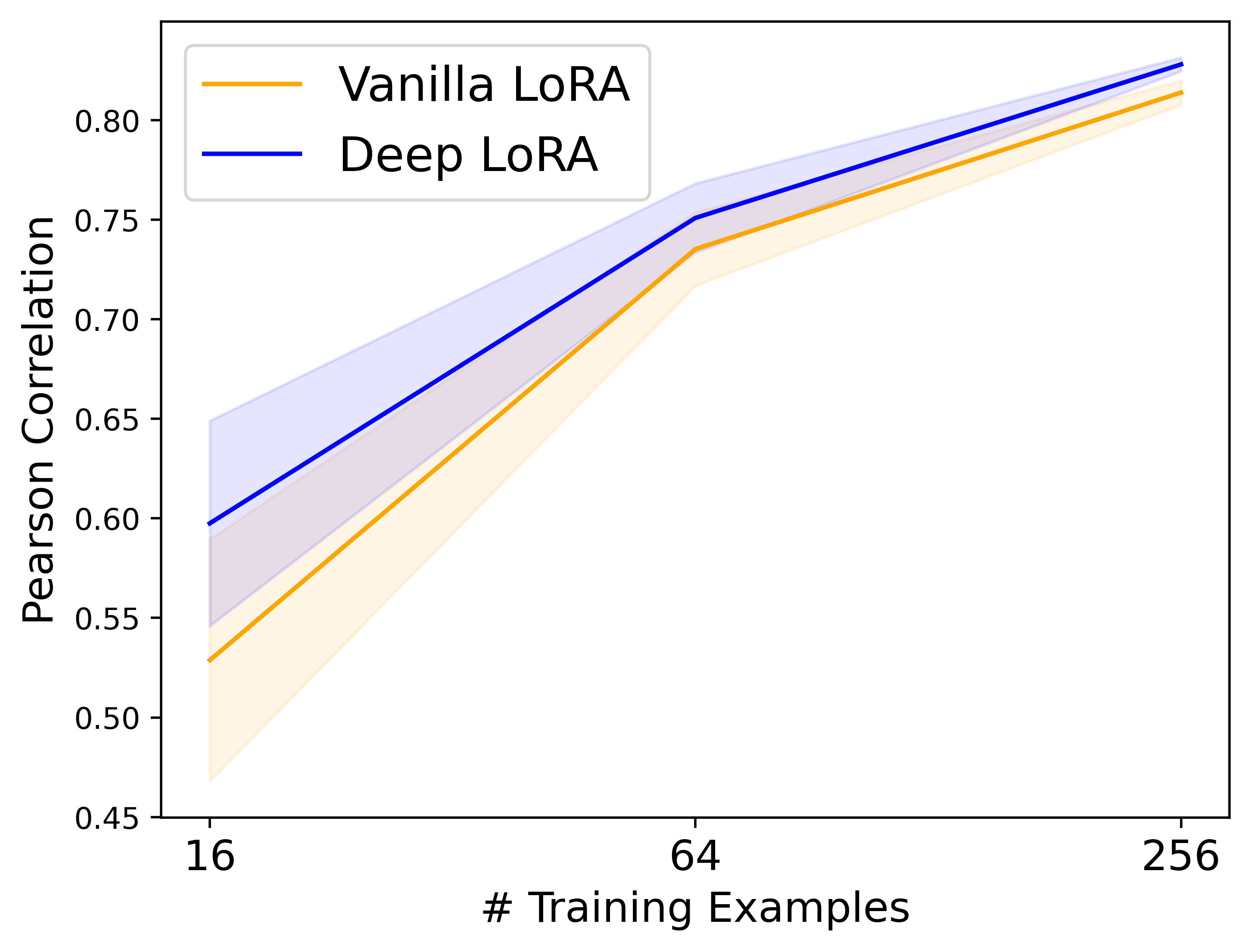}
\vspace{-0.2in}
\caption{\textbf{Deep LoRA shows better performance on few shot fine-tuning} over vanilla LoRA, with varying numbers of training samples. For each case, we draw $n$ samples at random from STS-B over 20 trials with different seeds, and measure performance on the validation split of each method using the same train set.}
\label{fig:fewshot_stsb}
\end{minipage}
\hfill
\begin{minipage}{0.32\textwidth}
\centering
\includegraphics[width=\linewidth]{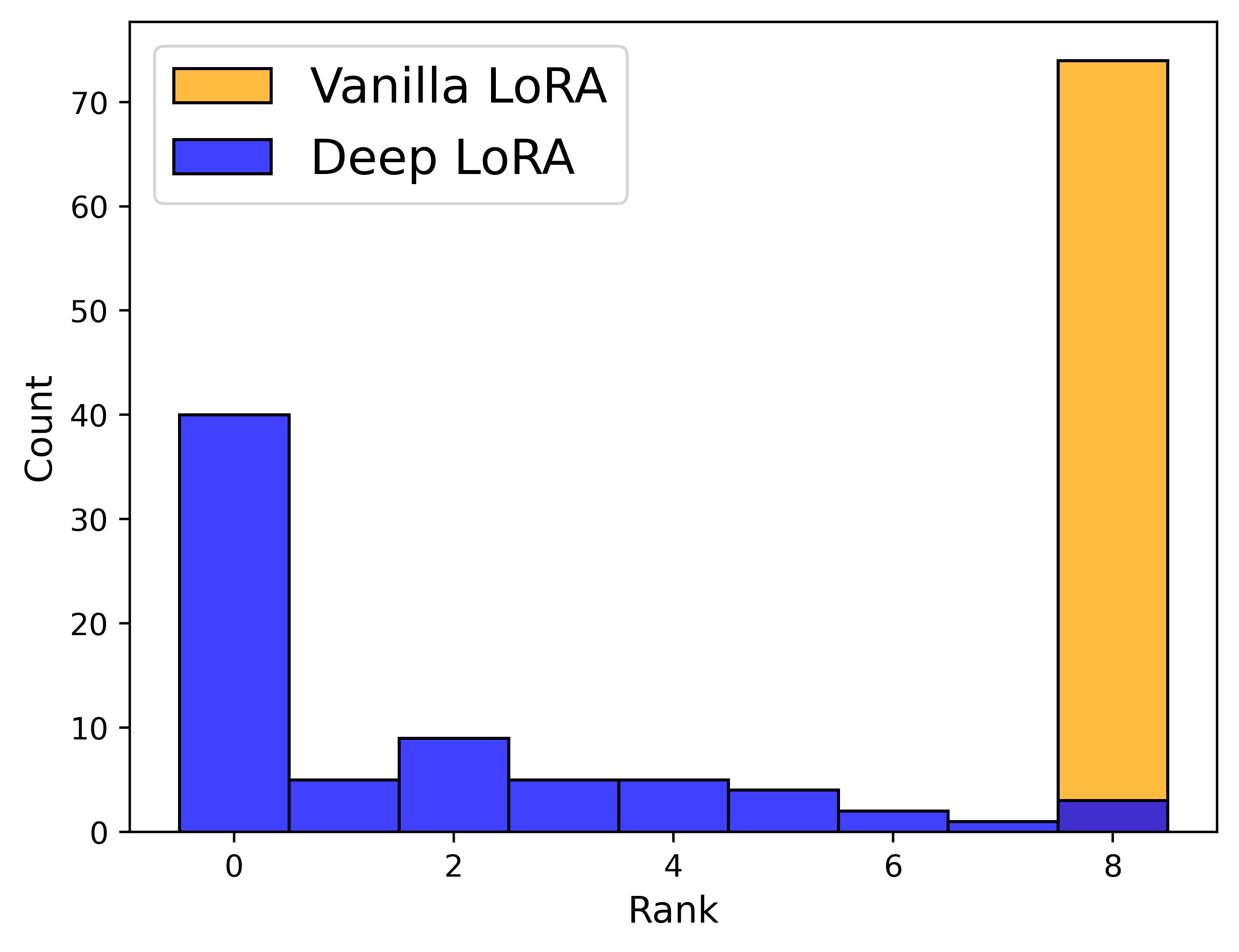}
\vspace{-0.2in}
\caption{\textbf{Deep LoRA finds lower rank solutions} compared to vanilla LoRA. We plot a histogram of numerical ranks for Deep LoRA and vanilla LoRA with $r=8$ after adapting to STS-B with 256 samples. The numerical rank is computed as the number of singular values $\sigma_i$ greater than $10^{-8}$ and $d\sigma_1 \epsilon $ where $\epsilon$ is machine epsilon.}
\label{fig:fewshot_256_ranks}
\end{minipage}
\hfill
\begin{minipage}{0.32\textwidth}
\centering
\vspace{-0.2in}
\includegraphics[width=\linewidth]{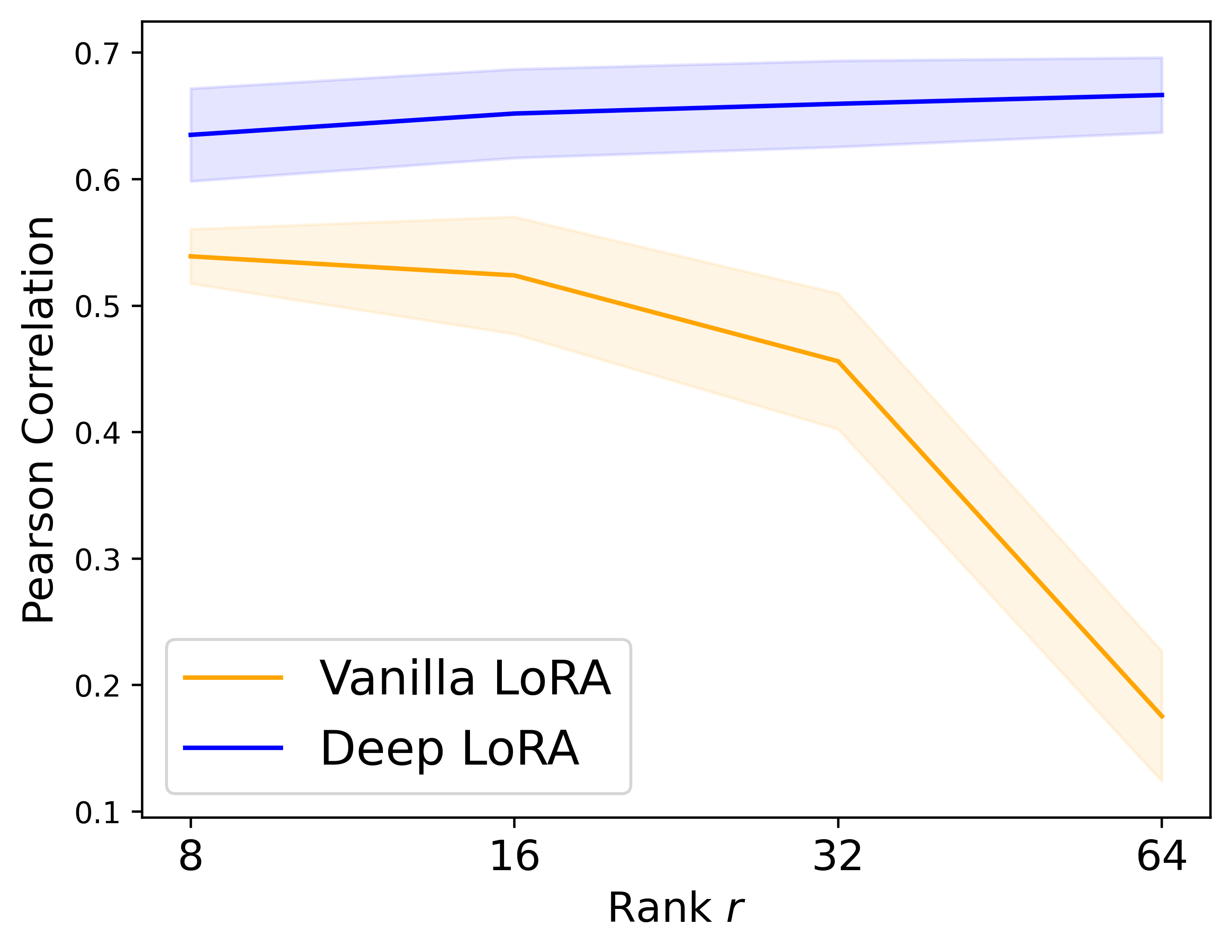}
\vspace{-0.2in}
\caption{\textbf{Deep LoRA is more robust to the choice of rank} compared to vanilla LoRA. For each choice of rank $r$, we draw 16 samples at random from STS-B over 5 trials with different seeds, and measure performance on the validation split of each method using the same train set.}
\label{fig:fewshot_stsb_varying_rank}
\end{minipage}
\end{figure}

\paragraph{Proposed method: Deep LoRA.} For vanilla LoRA, if we view adapting each weight matrix of model as an \emph{individual} low-rank factorization problem, we have demonstrated in previous sections that overparameterization and subsequently compressing factorizations improves generalization with minimal extra computational costs. With this in mind, we can employ a deep overparameterized adaptation of each pretrained weight as
\begin{equation}
\label{eq:deep_lora}
    \mW_k = \mW_{0k} + \underbrace{\mW_k^{(L)} \cdots \mW_k^{(2)} \mW_k^{(1)}}_{ =:\Delta \mW_k }
\end{equation}
where each $\mW_k^{(i)}$ is full-width, i.e., $\mW_k^{(i)} \in \R^{d \times d}$. Here, $L>0$ is the depth, and typically we choose $L=3$, which is precisely the setting of \Cref{fig:low_rank_subspace}. From the figure, we can see that (i) all the converged weights $\{\Delta \mW_k\}_{k=1}^m$ are very low-rank (left panel), (ii) the learning dynamics each for each weight \emph{approximately} stay within the same invariant subspace throughout the iterations (middle panel), and (iii) this happens independent of the training loss decreasing (right panel). 

These observations imply that deep overparameterized factorizations in Deep LoRA are \emph{highly compressible}, so we can apply the compression method from deep matrix completion in \Cref{sec:dmc} to compress the learning dynamics for each individual weight for model fine-tuning. Here, the major differences of our compression approach for deep LoRA from that of deep matrix completion is that (i) we have a separate compressed factorization for each layer to be adapted, and (ii) the fine-tuned loss function can be tailored for specific tasks (e.g., the cross-entropy) besides the $\ell_2$ loss. 

\paragraph{Advantages of Deep LoRA.}
Compared to vanilla LoRA, Deep LoRA has clear advantages that we highlight below. More details are provided in \Cref{sec:exp}.
\begin{itemize}[leftmargin=*]
    \item \textbf{Less overfitting in limited data regime.} Fine-tuning overparameterized models using LoRA can still result in overfitting in few shot or limited data regime \citep{SebastianRaschka_2023}. In comparison, the extra depth in \eqref{eq:deep_lora} of Deep LoRA can help prevent overfitting (see \Cref{fig:fewshot_stsb}), which is similar to deep matrix completion in \Cref{fig:depth_2_v_3}.
    \item \textbf{Robustness to the hyperparameter $r$.} As shown in \Cref{fig:fewshot_stsb_varying_rank}, by exploiting the intrinsic low-dimensional dynamics in GD via overparameterization in width, our approach is robust to the choice of the rank $r$ in fine-tuning.
\end{itemize}

Deep LoRA only requires $3r^2$ additional trainable parameters for each adapted layer compared to vanilla LoRA, where $r$ is relatively small (e.g., $r=8$).

\subsection{More Experimental Details}\label{sec:exp}

To evaluate our approach, we use a pretrained BERT \citep{devlin2019bert} base model and apply adaptation on all attention and feedforward weights in the transformer, resulting in 72 adapted layers in total. Unless stated otherwise, we use $r=8$ for both vanilla and Deep LoRA throughout all experiments, in which case Deep LoRA has roughly 0.01\% more parameters (with respect to BERT) than vanilla LoRA. We utilize Adam \citep{kingma2014adam} as an optimizer for both methods. See \Cref{app:exp_details} for more details on the experimental setup.

\begin{table}[t]
\vspace{-4em}
\caption{\textbf{Improvement of Deep LoRA over vanilla LoRA for limited data GLUE fine-tuning.} For each task, we draw 1024 samples at random over 10 trials with different seeds, and report the performance gap (with variance) on the validation split between Deep LoRA and vanilla LoRA using the same train set.}
\begin{center}
\begin{small}
\begin{sc}
\begin{tabular}{ccccccc}
\toprule
 & \textbf{CoLA} & \textbf{MNLI} & \textbf{MRPC} & \textbf{QNLI} & \textbf{QQP} \\
 \midrule
 $\Delta$ & $+0.090_{\pm 0.002}$ & $+0.011_{\pm 0.0005}$ & $+0.0042_{\pm 0.001}$ & $+0.048_{\pm 0.0009}$ & $+0.005_{\pm 0.0002}$ \\
\bottomrule
\end{tabular}
\vskip 0.15in
\begin{tabular}{ccccc}
\toprule
& \textbf{RTE} & \textbf{SST-2} & \textbf{STS-B} & \textbf{Overall} \\
\midrule
$\Delta$ & $+0.029_{\pm 0.002}$ & $+0.019_{\pm 0.0006}$ & $+0.018_{\pm 0.00006}$ & $+0.028_{\pm 0.002}$ \\
\bottomrule
\end{tabular}
\end{sc}
\end{small}
\end{center}
\vspace{-1em}
\label{tab:deep_lora}
\end{table}

\paragraph{Advantage I: Better generalization with limited data.}
We first evaluate our approach on tasks in the GLUE benchmark \citep{wang2018glue}, which is a standard benchmark for natural language understanding. To test the performance in a limited data setting, for one given trial of a single task, we randomly sample $1024$ examples from the task data for fine-tuning, a nd compare the difference in performance on the same train set between Deep LoRA and vanilla LoRA on the entire validation split. From the results  shown in \Cref{tab:deep_lora}, we can see that Deep LoRA delivers significant improvements across most tasks compared to vanilla LoRA, and on average improves performance by nearly 3 points, a notable margin. 

This improvement in performance becomes more pronounced in scenarios with severely limited data, such as few-shot settings. Applying the same sampling procedure as in the prior study to the STS-B dataset, we assess both approaches using only $n \in \{16, 64, 256\}$ training instances. Experiments in \Cref{fig:fewshot_stsb} illustrate that Deep LoRA consistently surpasses vanilla LoRA across all sample sizes, with the most significant difference observed when $n=16$.

\paragraph{Deep LoRA finds lower rank solutions.} 
We find that at the end of fine-tuning, Deep LoRA finds lower rank solutions for $\Delta \mW_k$ than vanilla LoRA, as shown in \Cref{fig:fewshot_256_ranks}. In the limited data setting (256 samples), we see that all adapted layers in the vanilla LoRA saturate the constrained numerical rank $r=8$, while most layers in Deep LoRA are perturbed by matrices with numerical ranks between 0 and 4.\footnote{A majority of them are in fact zero, i.e., no change from pretrained weights.} This suggests that Deep LoRA can adaptively select the appropriate rank for each layer depending on the task. This low-rank bias induces implicit regularization during the fine-tuning process and ultimately prevents overfitting to the task, particularly when only few training samples are available. As a practical consideration, Deep LoRA also requires a fraction of the memory cost to store compared to vanilla LoRA due to the parsimony in adapted weights. 

\paragraph{Advantage II: Robustness to choice of rank $r$.} Due to the scarcity of the target training data, choosing the rank $r$ in LoRA is a delicate process -- it needs to be large enough to capture the complexity in modeling the downstream task, while small enough to prevent overfitting. The proposed Deep LoRA, on the other hand, avoids catastrophic overfitting as we increase $r$, as demonstrated in \Cref{fig:fewshot_stsb_varying_rank}. This observation mirrors the behavior seen in deep matrix completion, as illustrated in  \Cref{fig:depth_2_v_3}. For shallow factorizations, an overestimation of rank $r$ leads to an increase in the generalization error. In contrast, deep factorizations remain resilient to overfitting.

Finally, we show that Deep LoRA outperforms vanilla LoRA for few-shot natural language generation fine-tuning in \Cref{app:nlg}. We also provide an ablation study in \Cref{app:comp_random_lora} on the compression mechanism for Deep LoRA and show that it is crucial for accelerating training.

\section{Conclusion \& Future Directions}\label{sec:conclusion}
In this work, we have provided an in-depth exploration of low-dimensionality and compressibility in the dynamics of deep overparameterized learning, providing theoretical understandings in the setting of deep matrix factorization and applications to efficient deep matrix completion and language model adaptation. Finally, we outline a couple potential future directions following this work.

\paragraph{Compressibility in non-linear settings.} Although the results on network compression in \Cref{sec:theory} exploit the specific gradient structure of deep matrix factorizations, we believe that our analysis can provide meaningful direction for analyzing the fully non-linear case.

To sketch an idea, consider the setting of \Cref{subsec:setup} except with a \emph{non-linear} factorization, i.e., \eqref{eqn:deep-matrix} becomes
\begin{equation}
    f(\bm \Theta) := \mW_L \sigma(\mW_{L-1} \cdots \sigma(\mW_2 \sigma(\mW_1)))
\end{equation} 
where $\sigma$ is (for example) the entry-wise ReLU activation. For concreteness, consider the $L=3$ case. The gradient of the loss with respect to, e.g., $\mW_2$ in \eqref{eq:l2_loss} is given by
\begin{equation*}
    \nabla_{\mW_2} \ell(\bm \Theta) = [h(\mW_2 \sigma(\mW_1)) \odot (\mW_3^\top \bm E)] \sigma(\mW_1)^\top
\end{equation*}
where $h$ is the entry-wise unit step function and $\bm E = f(\bm \Theta) - \bm \Phi$. Comparing this to the gradient in the linear setting \eqref{eq:grad}, there is a great deal of shared structure, with the two main differences being the non-linearity applied to $\mW_1$ in the post factor and a projection on the inner term via $h(\mW_2 \sigma(\mW_1))$. However, we still have the low-rank structure of $\mW_3^\top \bm E$, and the zeroing out of certain entries preserves approximate spectral properties of the matrix \citep{chatterjee2015matrix}. Moreover, this projection is akin to the masking via $\bm \Omega$ as in deep matrix completion from \Cref{sec:dmc}, for which we do find compressible dynamics. In \Cref{fig:spectra_w2}, we plot the singular value spectrum of the above gradient at small initialization, finding that the top $r^*$ singular values separate from the rest of the spectrum. This suggests that we may be able to identify a low-dimensional subspace along which we can achieve similar dynamics to the full parameter space.

\begin{figure}[t]
    \centering
    \vspace{-4em}
    \includegraphics[width=0.5\linewidth]{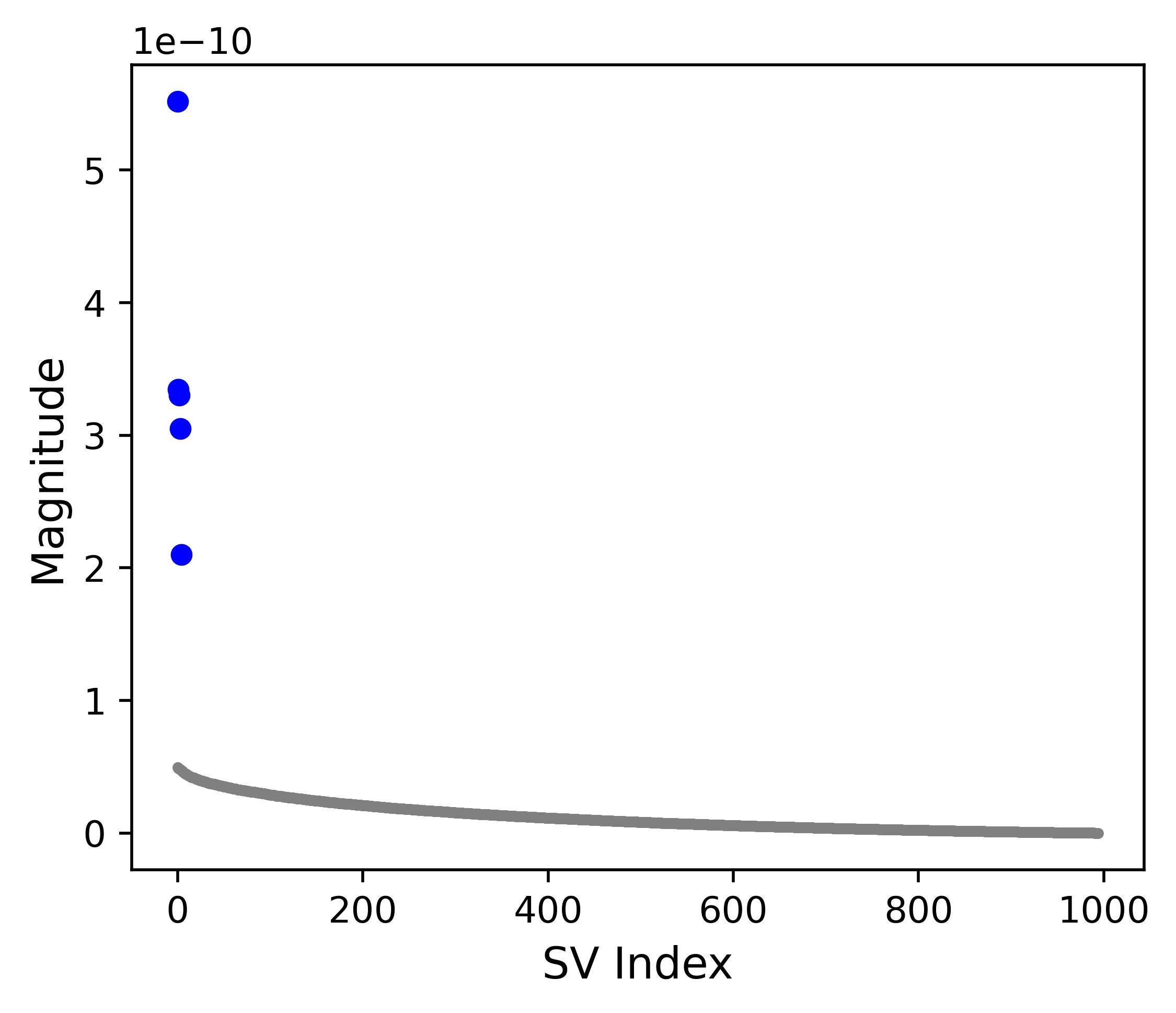}
    \caption{\textbf{Low-rank gradient of non-linear factorizations at initialization.} Singular values spectrum of 
    $\nabla_{\mW_2} \ell(\bm \Theta)$ at initialization for non-linear factorization with $L=3$, $d=1000$, $r^* = 5$, and $\epsilon_l = 10^{-3}$. The top 5 singular values separate from the tail of the spectrum.}
    \label{fig:spectra_w2}
\end{figure}

\paragraph{Extensions to Deep LoRA.} We have demonstrated the efficacy of Deep LoRA for natural language understanding and generation in \Cref{sec:dclora} and \Cref{app:nlg} respectively. However, it would be meaningful to evaluate Deep LoRA in other modalities, e.g., diffusion models, where fine-tuning on limited data is commonplace. Moreover, the high degree of alignment at initialization to the final adapted subspaces shown in \Cref{fig:low_rank_subspace} suggests that SGD (rather than Adam) can be used for the outer factors of Deep LoRA, further reducing memory costs. Finally, exploring the use of second-order methods to accelerate fine-tuning along the rank-$r$ subspace could be a potential improvement.

\paragraph{Implications for representation learning.} The low-rank bias in the end-to-end features of deep networks may have important connections to emergent phenomena in representation learning, such as \emph{deep neural collapse} \citep{zhai2024investigating,zhou2022all,yaras2022neural,wang2022linear,zhou2022optimization,zhu2021geometric,beaglehole2024average,li2024neural}, whereby the last-layer representations exhibit surprisingly simple structures. Moreover, by uncovering the low-rank evolution of \emph{individual} weights, we could shed light on more intricate phenomena such as \emph{progressive neural collapse} \citep{he2023law,wang2023understanding}.

\section*{Acknowledgement}
The work of L.B., P.W., and C.Y. were supported in part by DoE award DE-SC0022186, ARO YIP award W911NF1910027, and NSF CAREER award CCF-1845076. Q.Q., P.W., and C.Y. acknowledge support from ONR N00014-22-1-2529 and NSF CAREER CCF-214390. Q.Q. also acknowledges support from NSF CAREER CCF-2143904, NSF CCF-2212066, NSF CCF-2212326, and NSF IIS 2312842, an AWS AI Award, a gift grant from KLA, and MICDE Catalyst Grant.

{\small 
\bibliographystyle{plainnat}
\bibliography{biblio/large_models}}

\appendix
\newpage



\section{Related Works \& Future Directions}\label{app:related_works}

\paragraph{Implicit regularization.} The first work to theoretically justify implicit regularization in matrix factorization \citep{gunasekar2017implicit} was inspired by empirical work that looked into the implicit bias in deep learning \citep{neyshabur2015search} and made the connection with factorization approaches as deep nets using linear activations\footnote{Of course linear activation has been considered throughout the history of artificial neural nets, but the fact that a multilayer network with linear activation has an equivalent one-layer network meant this architecture was summarily ignored. This is evidenced in \citep{dalton1991artificial}: ``In summary, it makes no sense to use a
multilayered neural network when linear activation functions are used.''}. 
Since then a long line of literature has investigated deep factorizations and their low-rank bias, including \citep{arora2019implicit, moroshko2020implicit,timor2023implicit}; 
in fact there is so much work in this direction that there is already a survey in the Communications of the ACM \citep{vardi2023implicit}. 

Several older works explicitly imposed low-rank factorization in deep networks
\citep{jaderberg2014speeding, sainath2013low} or studied a low-rank factorization of the weights after the learning process \citep{denil2013predicting}. Newer works along these lines discuss initialization and relationships to regularization \citep{khodak2020initialization}. 

The work in \citet{oymak2019generalization} also discusses low-rank learning in deep nets, by studying the singular vectors of the Jacobian and arguing that the ``information space" or top singular vectors of the Jacobian are learned quickly. 
Very recent work has shown that the typical factorization of the weights in an attention layer of a transformer into key and query layers has an implicit bias towards low-rank weights \citep{tarzanagh2023transformers}. 

\paragraph{Overparameterization.} There is a sizeable body of work discussing the various benefits of overparameterization in deep learning settings, of which we discuss a few. \citet{du2019width} demonstrate that width is provably necessary to guarantee convergence of deep linear networks. \citet{arora2018optimization} show that overparameterization can result in an implicit acceleration in optimization dynamics for training deep linear networks. \citet{allen2019convergence} argue that overparameterization plays a fundamental role in rigorously showing that deep networks find global solutions in polynomial time. \citet{arpit2019benefits} shows that depth in ReLU networks improves a certain lower bound on the preservation of variance in information flowing through the network in the form of activations and gradients. 

\paragraph{LoRA and its variants.} There is a substantial body of existing work in the realm of parameter efficient fine-tuning -- see \citet{xu2023parameter} for a comprehensive survey. However, the method that has arguably gained the most traction in recent years is LoRA \citep{hu2021lora}, in which trainable rank decomposition matrices are added on top of frozen pretrained weights to adapt transformers to downstream tasks. Since then, there has been a plethora of variants. Generalized LoRA \citep{chavan2023one} proposes a general formulation of LoRA that encapsulates a handful of other parameter efficient adapters. AdaLoRA \citep{zhang2022adaptive} parameterizes the updates in an SVD-like form and iteratively prunes the inner factor to dynamically control the factorization rank. VeRA \citep{kopiczko2024vera} parameterizes the updates via diagonal adapters which are transformed via random projections that are shared across layers.

The idea of LoRA was initially inspired by the notion of low \textit{intrinsic dimension} of fine-tuning pretrained models. Intrinsic dimension for an objective was first proposed in \citet{li2018measuring}, where it was defined to be the minimum dimensionality needed for a random subspace to contain a solution to the objective. Using this idea, \citet{aghajanyan2021intrinsic} demonstrated that the objective of fine-tuning a pretrained model has a low intrinsic dimension. Building on this, \citet{zhang2023fine} learns the intrinsic subspace of a given fine-tune task from the original parameter trajectory to investigate transferability between these task-specific subspaces.

\section{Experimental Details}\label{app:exp_details}
The pretrained BERT and T5 models are retrieved from the HuggingFace \texttt{transformers} library \citep{wolf2019huggingface} as \texttt{google-bert/bert-base-cased} and \texttt{google-t5/t5-base} respectively. We choose the best learning rate for each method from $\eta \in \{10^{-5}, 10^{-4}, 10^{-3}, 10^{-2}\}$ on STS-B with 1024 samples, and find that $\eta = 10^{-4}$ and $\alpha=8$ works best for vanilla LoRA, while $\eta = 10^{-2}$ with $\gamma = 10^{-2}$ works best for Deep LoRA (although $\gamma$ can be chosen relatively freely). We tried using a linear decay learning rate but found worse results in the limited data setting for both vanilla and Deep LoRA. We use a maximum sequence length of 128 tokens for all tasks. Vanilla LoRA is initialized in the same fashion as the original paper (i.e., $\bm W_k^{(2)}$ is initialized to all zeros, $\bm W_k^{(1)}$ is initialized to be Gaussian with standard deviation 1), whereas Deep LoRA is compressed from a full-width 3-layer factorization with orthogonal initialization of scale $\epsilon_l = 10^{-3}$. We use a train batch size of 16, and train all models until convergence in train loss, and use the final model checkpoint for evaluation. For generative tasks, we use beam search \cite{freitag2017beam} with beam size 4 and maximum generation length of 64. All experiments are carried out on a single NVIDIA Tesla V100 GPU, with time and memory usage reported in \Cref{tab:runtimes}.

\begin{table*}[ht]
\caption{Comparison of step wall-time and memory usage for vanilla and Deep LoRA.}
\begin{center}
\begin{small}
\begin{sc}
\begin{tabular}{lcc}
\toprule
\textbf{Method} & \textbf{Iteration Wall-Time (ms)} & \textbf{Memory Usage (GB)} \\
\midrule
Vanilla LoRA & 102 & 12.526 \\
Deep LoRA & 106 & 12.648 \\
\bottomrule
\end{tabular}
\end{sc}
\end{small}
\end{center}
\label{tab:runtimes}
\end{table*}

\section{Evaluation on Natural Language Generation}\label{app:nlg}

In addition to the natural language understanding tasks evaluated in \Cref{sec:dclora}, we test the effectiveness of Deep LoRA compared to vanilla LoRA for few-shot fine-tuning for natural language generation (NLG), specifically on the E2E dataset \citep{novikova2017e2e} with the T5 base model \citep{raffel2020exploring}. All hyperparameters are as reported in \Cref{sec:dclora} and \Cref{app:exp_details}. The results are shown in \Cref{tab:nlg}. We observe significant improvements using Deep LoRA in BLEU \citep{Papineni02bleu} and ROUGE \citep{lin2004rouge} scores, with marginally worse results with respect to METEOR \citep{banarjee2005} score.

\begin{table*}[ht]
\caption{\textbf{Improvement of Deep LoRA over vanilla LoRA for few-shot NLG fine-tuning.} On the E2E dataset, we draw 16 samples at random over 10 trials with different seeds, and report the average performance gap on the validation split between Deep LoRA and vanilla LoRA for various metrics using the same train set.}
\begin{center}
\begin{small}
\begin{sc}
\begin{tabular}{ccccccc}
\toprule
 & \textbf{BLEU} & \textbf{ROUGE-1} & \textbf{ROUGE-2} & \textbf{ROUGE-L} & \textbf{ROUGE-LSUM} & \textbf{METEOR} \\
 \midrule
 $\Delta$ & $+0.033$ & $+0.032$ & $+0.056$ & $+0.061$ & $+0.047$ & $-0.00036$ \\
\bottomrule
\end{tabular}
\end{sc}
\end{small}
\end{center}
\label{tab:nlg}
\end{table*}

\section{Ablating Compression Mechanism}\label{app:ablate_comp}

\subsection{Deep Matrix Completion}\label{app:comp_random_dmc}

\begin{figure}[ht]
    \centering
    \includegraphics[width=0.7\linewidth]{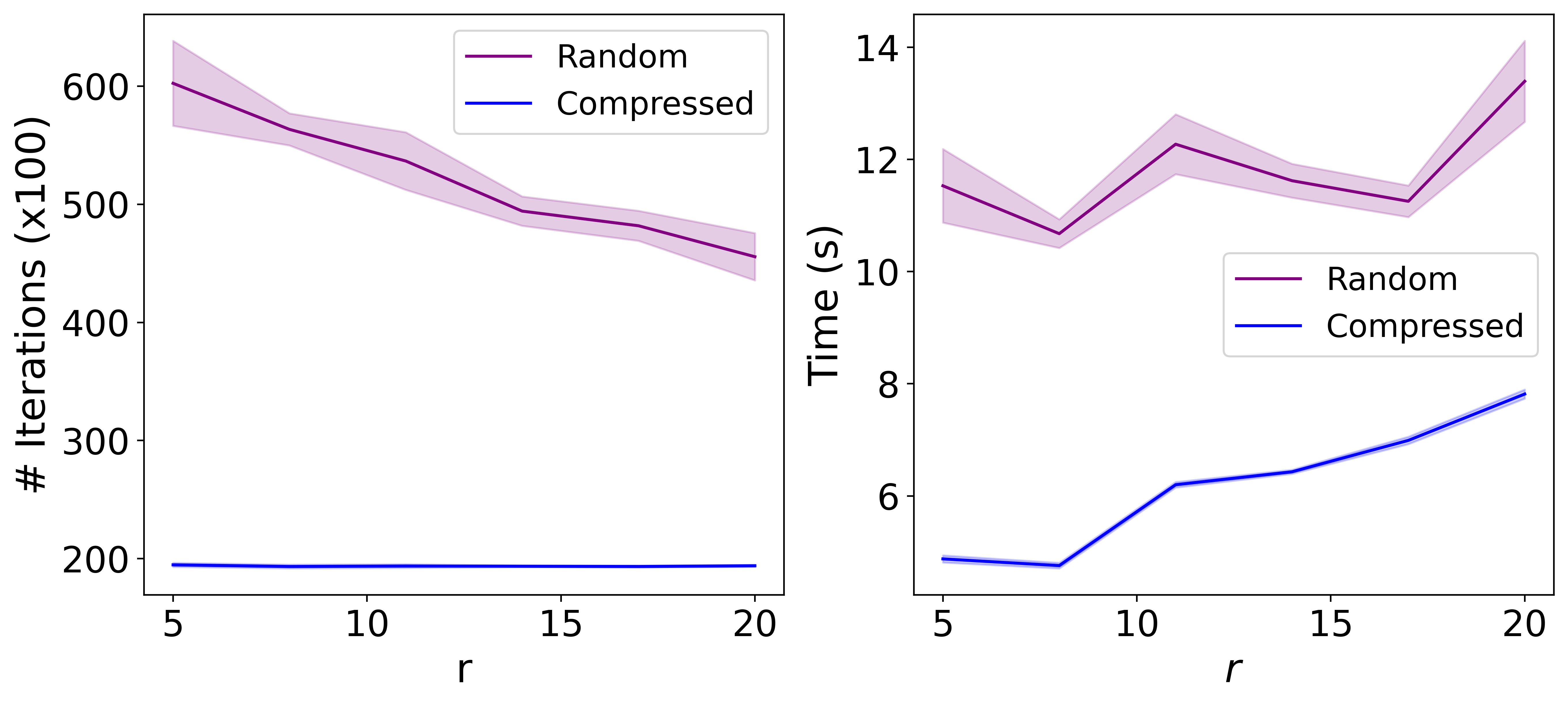}
    \caption{\textbf{Comparing efficiency of compressed networks vs. randomly initialized narrow networks for deep matrix completion} with different overestimated $r$ and $L=3$, $d=1000$, $r^* = 5$, $\epsilon_l=10^{-3}$ and 20\% of entries observed. \textit{Left}: Number of iterations to converge. \textit{Right}: Wall-time to converge.}
    \label{fig:comp_iter_time}
\end{figure}

We compare the training efficiency of deep $2r$-compressed factorizations (within a wide network of width $d \gg r$) with randomly initialized deep factorizations of width $2r$. As depicted in \Cref{fig:comp_iter_time} (left), the compressed factorization requires fewer iterations to reach convergence, and the number of iterations necessary is almost unaffected by $r$. Consequently, training compressed factorizations is considerably more time-efficient than training narrow networks of the same size, provided that $r$ is not significantly larger than $r^*$. The distinction between compressed and narrow factorizations underscores the benefits of wide factorizations, as previously demonstrated and discussed in \Cref{fig:depth_2_v_3} (right), where increasing the width results in faster convergence. However, increasing the width alone also increases computational costs -- by employing compression, we can achieve the best of both worlds.

\subsection{Deep LoRA}\label{app:comp_random_lora}

\begin{figure}[ht]
\centering
\includegraphics[width=0.7\linewidth]{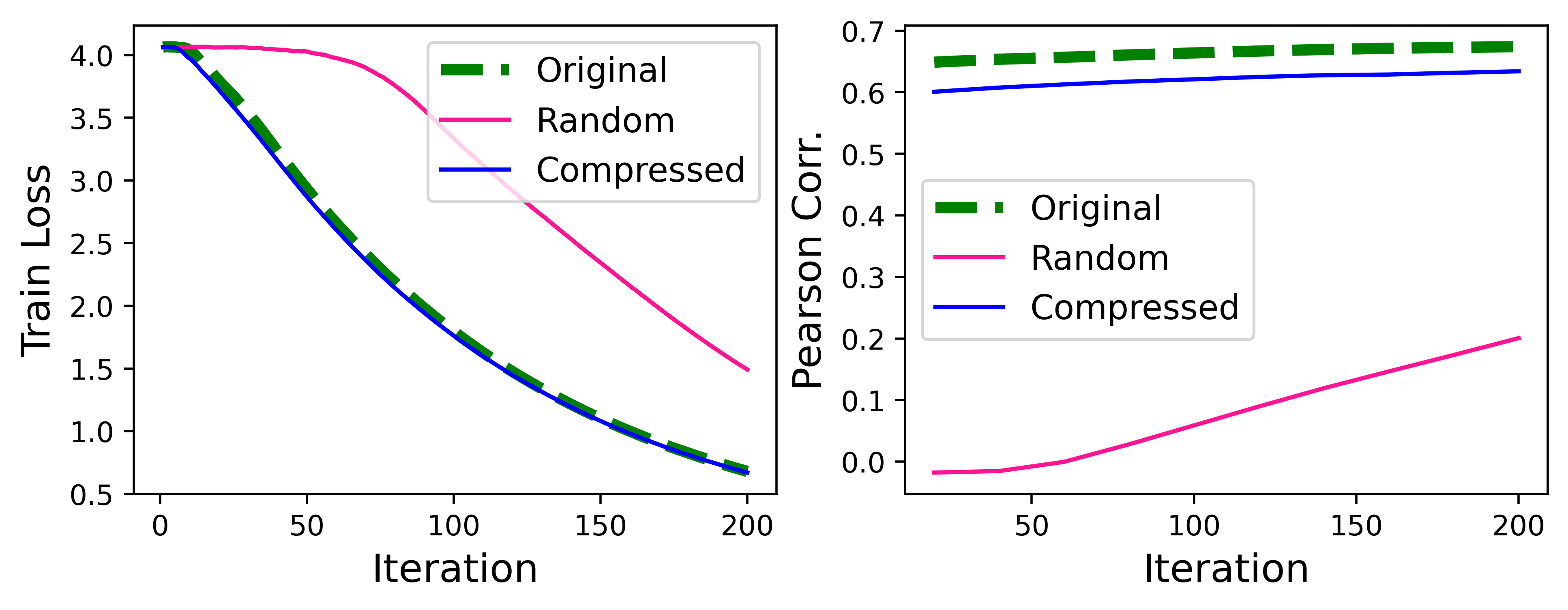}
\vspace{-0.2in}
\caption{\textbf{Compression enables faster convergence of Deep LoRA.} We compare full-width, compressed, and narrow deep factorizations for adapting to STS-B with 16 samples. \textit{Left}: Batch train loss vs. iterations. \textit{Right}: STS-B evaluation metric (Pearson correlation) vs. iterations. }
\label{fig:narrow_vs_wide_lora}
\end{figure}

We verify that compression is crucial for the efficiency of Deep LoRA. We compare the performance of three different approaches: (i) \textit{Original}, where we use a three-layer full-width factorization as in \eqref{eq:deep_lora}, (ii) \textit{Compressed}, which is the rank-$r$ compression of \eqref{eq:deep_lora} (a.k.a. Deep LoRA), and (iii) \textit{Random}, where the $\mW_k^{(i)}$ in \eqref{eq:deep_lora} are initialized randomly with $\mW_k^{(2)} \in \R^{r \times r}$.  We can see that via compression, Deep LoRA can achieve similar convergence behavior to the original overparameterized factorization with much fewer parameters, while the randomly initialized version takes much longer to train, similar to the result for deep matrix completion in \Cref{app:comp_random_dmc}.

\section{Proofs}\label{app:proofs}

The analytic form of the gradient $\nabla_{\bm \Theta}\,\ell(\bm \Theta)$ is given by
\begin{equation}\label{eq:grad}
    \nabla_{\mW_l} \ell(\bm \Theta) = \mW_{L:l+1}^\top \bm E \mW_{l-1:1}^\top, \; l \in [L]
\end{equation}
where $\bm E = f(\bm \Theta) - \bm \Phi$, which when substituted into \eqref{eq:gd} gives the update rules
\begin{equation}\label{eq:update}
    \mW_l(t+1) = (1 - \eta \lambda) \mW_l(t) - \eta \mW_{L:l+1}^\top(t) \bm E(t) \mW_{l-1:1}^\top(t), \; l \in [L]
\end{equation}
for $t = 0, 1, 2, \dots$, where $\bm E(t) = f(\bm \Theta(t)) - \bm \Phi$.

We first establish the following \Cref{lem:1} -- the claim in \Cref{thm:1} then follows in a relatively straightforward manner. We note that all statements quantified by $i$ in this section implicity hold for all $i \in [m]$ (as defined in \Cref{thm:1}) for the sake of notational brevity. 

\subsection{Proof of \Cref{thm:1}}\label{app:proof_thm1}

\begin{lemma}\label{lem:1}
Under the setting of \Cref{thm:1}, there exist orthonormal sets $\{\bm u_i^{(l)}\}_{i=1}^m \subset \R^d$ and $\{\bm v_i^{(l)}\}_{i=1}^m \subset \R^d$ for $l \in [L]$ satisfying $\bm v_i^{(l+1)} = \bm u_i^{(l)}$ for all $l \in [L-1]$ such that the following hold for all $t \geq 0$: 
\begin{alignat*}{2}
    \mathcal{A}(t)&: \mW_{l}(t) \bm v_i^{(l)} = \rho_l(t) \bm u_i^{(l)} \quad &&\forall l \in [L], \\
    \mathcal{B}(t)&: \mW_{l}^\top(t) \bm u_i^{(l)} = \rho_l(t) \bm v_i^{(l)} \quad &&\forall l \in [L], \\
    \mathcal{C}(t)&: \bm \Phi^\top \mW_{L:l+1}(t) \bm u_i^{(l)} = \bm 0 \quad && \forall l \in [L], \\
    \mathcal{D}(t)&: \bm \Phi \mW^\top_{l-1:1}(t)\bm v_i^{(l)} = \bm 0 \quad && \forall l \in [L],
\end{alignat*}
where $\rho_l(t) = \rho_l(t-1)\cdot (1 - \eta \lambda - \eta \cdot \prod_{k\neq l} \rho_k(t-1)^2)$ for all $t \geq 1$ with $\rho_l(0) = \epsilon_l > 0$.
\end{lemma}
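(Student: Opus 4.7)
The plan is to induct on $t$, holding the orthonormal sets $\{\bm u_i^{(l)}\}$ and $\{\bm v_i^{(l)}\}$ fixed once and for all at $t=0$. For the base case, I choose $\{\bm v_i^{(1)}\}_{i=1}^m$ as any orthonormal basis of an $m$-dimensional subspace of $\ker(\bm \Phi)\cap \ker(\bm \Phi^\top \mW_{L:1}(0))$. Since $\mW_{L:1}(0)$ is invertible by the orthogonal initialization \eqref{eq:init} and $\mathrm{rank}(\bm \Phi)\le r$, each kernel has codimension at most $r$, so the intersection has dimension at least $d-2r=m$. The remaining vectors are defined recursively by $\bm u_i^{(l)} := \epsilon_l^{-1} \mW_l(0) \bm v_i^{(l)}$ and $\bm v_i^{(l+1)} := \bm u_i^{(l)}$; orthonormality is preserved because $\mW_l(0)^\top \mW_l(0) = \epsilon_l^2 \bm I_d$. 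Conditions $\mathcal{A}(0),\mathcal{B}(0)$ hold with $\rho_l(0)=\epsilon_l$ by definition, while $\mathcal{C}(0)$ and $\mathcal{D}(0)$ follow because telescoping yields $\mW_{l-1:1}^\top(0) \bm v_i^{(l)} \propto \bm v_i^{(1)}$ and $\mW_{L:l+1}(0) \bm u_i^{(l)} \propto \bm u_i^{(L)} \propto \mW_{L:1}(0) \bm v_i^{(1)}$, both of which are annihilated by $\bm \Phi$ and $\bm \Phi^\top$ respectively by the choice of $\bm v_i^{(1)}$.

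For the inductive step, assuming all four conditions at iteration $t$, I verify $\mathcal{A}(t+1)$ by applying the GD update \eqref{eq:update} to $\bm v_i^{(l)}$. The weight-decay part contributes $(1-\eta\lambda)\rho_l(t)\bm u_i^{(l)}$ by $\mathcal{A}(t)$. In the gradient term $\mW_{L:l+1}^\top(t)\bm E(t)\mW_{l-1:1}^\top(t)\bm v_i^{(l)}$ with $\bm E(t)=\mW_{L:1}(t)-\bm \Phi$, iterated use of $\mathcal{B}(t)$ through the glueing $\bm v_i^{(k)}=\bm u_i^{(k-1)}$ gives $\mW_{l-1:1}^\top(t)\bm v_i^{(l)} = \prod_{k<l}\rho_k(t)\,\bm v_i^{(1)}$, so the $\bm \Phi$ contribution dies by $\mathcal{D}(t)$, while iterated $\mathcal{A}(t)$ reduces $\mW_{L:1}(t)\bm v_i^{(1)}$ to $\prod_{k=1}^L \rho_k(t)\,\bm u_i^{(L)}$. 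Finally, $\mW_{L:l+1}^\top(t)$ applied to $\bm u_i^{(L)}$ telescopes via $\mathcal{B}(t)$ to $\prod_{k>l}\rho_k(t)\,\bm u_i^{(l)}$. Multiplying all factors produces $\rho_l(t)\prod_{k\neq l}\rho_k(t)^2\,\bm u_i^{(l)}$, and combining with the weight-decay part reproduces exactly the recursion \eqref{eq:rho}, giving $\rho_l(t+1)\bm u_i^{(l)}$. The verification of $\mathcal{B}(t+1)$ is a mirror-image computation starting from $\mW_l^\top(t+1)\bm u_i^{(l)}$ and swapping the roles of $\mathcal{A}(t)$ and $\mathcal{B}(t)$, along with $\mathcal{C}(t)$ in place of $\mathcal{D}(t)$.

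Conditions $\mathcal{C}(t+1)$ and $\mathcal{D}(t+1)$ then require no new gradient computation: the same telescoping via the freshly established $\mathcal{A}(t+1),\mathcal{B}(t+1)$ reduces them to $\bm \Phi\bm v_i^{(1)}=\bm 0$ and $\bm \Phi^\top \bm u_i^{(L)}=\bm 0$, both built into the construction at $t=0$. In this sense $\mathcal{C}$ and $\mathcal{D}$ are essentially time-independent once the singular-vector properties $\mathcal{A},\mathcal{B}$ are established. The hard part is purely bookkeeping: carrying out the nested telescoping in the gradient computation so that exactly $\prod_{k\neq l}\rho_k(t)^2$ appears (not some other combination), which requires being careful with the dual chains $\bm v_i^{(k+1)}=\bm u_i^{(k)}$ going forward through $\mW_{L:l+1}^\top\mW_{L:1}$ and backward through $\mW_{l-1:1}^\top$. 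Once this single product identity is pinned down correctly, the recursion \eqref{eq:rho} falls out, and all four conditions propagate in lockstep.
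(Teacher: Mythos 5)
Your proposal is correct and follows essentially the same strategy as the paper's proof: choose $\{\bm v_i^{(1)}\}$ in an $m$-dimensional subspace of $\ker(\bm\Phi)\cap\ker(\bm\Phi^\top\mW_{L:1}(0))$ (the paper phrases this as $\mathcal N(\bm\Psi)\cap\mathcal N(\bm\Psi^\top\mW_1(0))$ with $\bm\Psi=\mW_{L:2}^\top(0)\bm\Phi$, which is the same subspace once you use invertibility of the initial weights), propagate $\bm u_i^{(l)},\bm v_i^{(l+1)}$ forward via the orthogonal initializations, and then induct on $t$ using $\mathcal D(t)$ (resp.\ $\mathcal C(t)$) to kill the $\bm\Phi$ term in the gradient and telescoping through $\mathcal A(t),\mathcal B(t)$ to extract $\prod_{k\neq l}\rho_k(t)^2$. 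The remaining details you leave as "bookkeeping" are exactly the computations the paper carries out, and your product identity for the gradient term is correct.
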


\begin{proof}
Define $\bm \Psi := \mW^\top_{L:2}(0)\bm \Phi$. Since the rank of $\bm \Phi$ is at most $r$, we have that the rank of $\bm \Psi \in \R^{d \times d}$ is at most $r$, which implies that $\mathrm{dim} \,\mathcal{N}\left(\bm \Psi\right) = \mathrm{dim}\,\mathcal{N}\left(\bm \Psi^\top \right) \ge d - r$. We define the subspace
\begin{align*}
    \mathcal S := \mathcal N\left(\bm \Psi\right)\cap \mathcal N\left(\bm \Psi^\top \bm{W}_1(0)\right) \subset \R^d.
\end{align*}
Since $\bm{W}_1(0) \in \R^{d\times d}$ is nonsingular, we have
\begin{align*}
    \mathrm{dim}(\mathcal S) \geq 2(d-r) - d = m. 
\end{align*}
Let $\{\bm v_i^{(1)}\}_{i=1}^m$ denote an orthonormal set contained in $\mathcal S$ and set $\bm{u}_i^{(1)} := \mW_1(0)\bm v_i^{(1)}/\epsilon_1$, where $\epsilon_1 > 0$ is the scale of $\mW_1(0)$ -- since $\mW_1(0) / \epsilon_1$ is orthogonal, $\{\bm u_i^{(1)}\}_{i=1}^m$ is also an orthonormal set. Then we trivially have $\mW_1(0)\bm v_i^{(1)} = \epsilon_1 \bm{u}_i^{(1)}$, which implies $\mW_1^\top(0) \bm u_i^{(1)} = \epsilon_1 \bm v_i^{(1)}$. It follows from $\bm{v}_i^{(1)} \in \mathcal S$ that $\bm \Psi \bm{v}_i^{(1)} = \bm{0}$ and $\bm \Psi^\top \mW_1(0) \bm{v}_i^{(1)} = \bm{0}$, which is equivalent to $\mW^\top_{L:2}(0)\bm \Phi \bm v_i^{(1)} = \bm 0$ and $\bm\Phi^\top \mW_{L:2}(0) \mW_1(0) \bm v_i^{(1)} = \epsilon_1 \bm\Phi^\top \mW_{L:2}(0) \bm u_i^{(1)} = \bm 0$ respectively. Since $\mW_{L:2}^\top(0)$ is full column rank, we further have that $\bm \Phi \bm v_i^{(1)} = \bm 0$.

Now let $\mathcal E(l)$ denote that we have orthonormal sets $\{\bm u_i^{(l)}\}_{i=1}^m$ and $\{\bm v_i^{(l)}\}_{i=1}^m$ satisfying $\mW_l(0) \bm v_i^{(l)} = \epsilon_l \bm u_i^{(l)}$, $\mW_l^\top(0) \bm u_i^{(l)} = \epsilon_l \bm v_i^{(l)}$, $\bm \Phi^\top \mW_{L:l+1}(0) \bm u_i^{(l)} = \bm 0$, and $\bm \Phi \mW_{l-1:1}^\top(0) \bm v_i^{(l)} = \bm 0$. From the above arguments, we have that $\mathcal E(1)$ holds -- now suppose $\mathcal E(k)$ holds for some $1 \le k < L$. Set $\bm v_i^{(k+1)} := \bm u_i^{(k)}$ and $\bm u_i^{(k+1)} := \mW_{k+1}(0) \bm v_i^{(k+1)}/\epsilon_{k+1}$. This implies that $\mW_{k+1}(0) \bm v_i^{(k+1)} = \epsilon_{k+1} \bm u_i^{(k+1)}$ and $\mW_{k+1}^\top(0) \bm u_i^{(k+1)} = \epsilon_{k+1} \bm v_i^{(k+1)}$. Moreover, we have 
\begin{align*}
    \bm \Phi^\top \mW_{L:(k+1)+1}(0) \bm u_i^{(k+1)} &= \bm \Phi^\top\mW_{L:k+1}(0) \mW_{k+1}^\top(0) \bm u_i^{(k+1)} / \epsilon_{k+1}^2 \\
    &= \bm \Phi^\top\mW_{L:k+1}(0) \bm v_i^{(k+1)} / \epsilon_{k+1} \\
    &= \bm \Phi^\top\mW_{L:k+1}(0) \bm u_i^{(k)} / \epsilon_{k+1}  = \bm 0,
\end{align*}
where the first two equalities follow from orthogonality and $\bm u_i^{(k+1)} = \mW_{k+1}(0) \bm v_i^{(k+1)}/\epsilon_{k+1}$, and the last equality is due to $\bm v_i^{(k+1)} = \bm u_i^{(k)}$. Similarly, we have
\begin{align*}
    \bm \Phi \mW_{(k+1)-1:1}^\top(0) \bm v_i^{(k+1)} & = \bm \Phi \mW_{k-1:1}^\top(0) \mW_k^\top(0) \bm v_i^{(k+1)}  \\
    &= \bm \Phi \mW_{k-1:1}^\top(0)  \mW_k^\top(0) \bm u_i^{(k)} \\
    & = \epsilon_k \bm \Phi \mW_{k-1:1}^\top(0) \bm v_i^{(k)} = \bm 0,
\end{align*}
where the second equality follows from $\bm v_i^{(k+1)} = \bm u_i^{(k)}$ and the third equality is due to $\mW_k^\top(0)\bm u_i^{(k)} = \epsilon_k \bm v_i^{(k)}$. Therefore $\mathcal E(k+1)$ holds, so we have $\mathcal E(l)$ for all $l \in [L]$. As a result, we have shown the base cases $\mathcal A(0)$, $\mathcal B(0)$, $\mathcal C(0)$, and $\mathcal D(0)$. 

Now we proceed by induction on $t \ge 0$. Suppose that $\mathcal A(t)$, $\mathcal B(t)$, $\mathcal C(t)$, and $\mathcal D(t)$ hold for some $t \ge 0$. First, we show $\mathcal A(t+1)$ and $\mathcal B(t+1)$. We have
\begin{align*}
    \mW_l(t+1) \bm v_i^{(l)}  & =  \left[(1-\eta\lambda) \mW_l(t) -  \eta \mW_{L:l+1}^\top (t) \bm E(t)\mW_{l-1:1}^\top (t) \right]\bm v_i^{(l)} \\
    &= \left[(1-\eta\lambda) \mW_l(t) -  \eta \mW_{L:l+1}^\top (t)\left(\mW_{L:1}(t) - \bm \Phi\right)\mW_{l-1:1}^\top (t) \right]\bm v_i^{(l)} \\
    & = (1-\eta\lambda)\bm{W}_l(t) \bm v_i^{(l)} - \eta \mW_{L:l+1}^\top(t) \mW_{L:1}(t) \mW_{l-1:1}^\top(t) \bm v_i^{(l)} \\
    & = (1-\eta\lambda)\bm{W}_l(t) \bm v_i^{(l)} - \eta \cdot (\prod_{k\neq l} \rho_k^2(t)) \bm{W}_l(t) \bm v_i^{(l)} \\
    &= \rho_l(t) \cdot (1 - \eta \lambda - \eta \cdot \prod_{k\neq l} \rho_k^2(t)) \bm u_i^{(l)} = \rho_l(t+1) \bm u_i^{(l)}
\end{align*}
for all $l \in [L]$, where the first equality follows from \eqref{eq:update}, the second equality follows from definition of $\bm E(t)$, the third equality follows from $\mathcal D(t)$, and the fourth equality follows from $\mathcal A(t)$ and $\mathcal B(t)$ applied repeatedly along with $\bm v_i^{(l+1)} = \bm u_i^{(l)}$ for all $l \in [L-1]$, proving $\mathcal A(t+1)$. Similarly, we have 
\begin{align*}
    \mW_l^\top(t+1) \bm u_i^{(l)}  & = \left[(1-\eta\lambda) \mW_l^\top(t) -  \eta \mW_{l-1:1}(t) \bm E^\top(t) \mW_{L:l+1}(t) \right]\bm u_i^{(l)} \\
    &= \left[(1-\eta\lambda) \mW_l^\top(t) -  \eta \mW_{l-1:1}(t) \bm \left(\mW_{L:1}^\top(t) - \bm \Phi^\top\right) \mW_{L:l+1}(t) \right]\bm u_i^{(l)} \\
    &= (1-\eta\lambda) \mW_l^\top(t) \bm u_i^{(l)} - \eta \mW_{l-1:1}(t) \mW_{L:1}^\top(t) \mW_{L:l+1}(t) \bm u_i^{(l)} \\
    &= (1-\eta\lambda) \mW_l^\top(t) \bm u_i^{(l)} - \eta \cdot (\prod_{k\neq l} \rho_k^2(t)) \mW_l^\top(t) \bm u_i^{(l)} \\
    &= \rho_l(t) \cdot (1 - \eta \lambda - \eta \cdot \prod_{k\neq l} \rho_k^2(t)) \bm v_i^{(l)} = \rho_l(t+1) \bm v_i^{(l)}
\end{align*}
for all $l \in [L]$, where the third equality follows from $\mathcal C(t)$, and the fourth equality follows from $\mathcal A(t)$ and $\mathcal B(t)$ applied repeatedly along with $\bm v_i^{(l+1)} = \bm u_i^{(l)}$ for all $l \in [L-1]$, proving $\mathcal B(t+1)$. Now, we show $\mathcal C(t+1)$. For any $k \in [L-1]$, it follows from $\bm v_i^{(k+1)} = \bm u_i^{(k)}$ and $\mathcal A(t+1)$ that 
\begin{equation*}
    \mW_{k+1}(t+1) \bm u_i^{(k)} = \mW_{k+1}(t+1) \bm v_i^{(k+1)}  = \rho_{k+1}(t+1) \bm u_i^{(k+1)}.
\end{equation*}
Repeatedly applying the above equality for $k = l, l+1, \dots, L-1$, we obtain
\begin{equation*}
          \bm \Phi^\top \mW_{L:l+1}(t) \bm u_i^{(l)} = \left[\prod_{k=l}^{L-1} \rho_{k+1}(t)\right]\cdot \bm \Phi^\top \bm u_i^{(L)} = \bm 0
\end{equation*}
which follows from $\mathcal C(t)$, proving $\mathcal C(t+1)$. Finally, we show $\mathcal D(t+1)$. For any $k \in \{2, \dots, L\}$, it follows from $\bm v_i^{(k)} = \bm u_i^{(k-1)}$ and $\mathcal B(t+1)$ that 
\begin{equation*}
    \mW_{k-1}^\top(t+1) \bm v_i^{(k)} = \mW_{k-1}^\top(t+1) \bm u_i^{(k-1)} = \rho_{k-1}(t+1) \bm v_i^{(k-1)}.  
\end{equation*}
Repeatedly applying the above equality for $k = l, l-1, \dots, 2$, we obtain
\begin{equation*}
    \bm \Phi \mW^\top_{l-1:1}(t)\bm v_i^{(l)} = \left[\prod_{k=2}^l \rho_{k-1}(t)\right] \cdot \bm \Phi \bm v_i^{(1)} = \bm 0
\end{equation*}
which follows from $\mathcal{D}(t)$. Thus we have proven $\mathcal D(t+1)$, concluding the proof. 
\end{proof}

\begin{proof}[Proof of \Cref{thm:1}]
By $\mathcal{A}(t)$ and $\mathcal{B}(t)$ of \Cref{lem:1}, there exists orthonormal matrices $\{ \bm U_{l, 2} \}_{l=1}^L \subset \R^{d \times m}$ and $\{ \bm V_{l, 2} \}_{l=1}^L \subset \R^{d \times m}$ for $l \in [L]$ satisfying $\bm V_{l+1, 2} = \bm U_{l, 2}$ for all $l \in [L-1]$ as well as 
\begin{equation}\label{eq:svd_22}
    \bm W_l(t) \bm V_{l, 2} = \rho_l(t) \bm U_{l, 2} \quad \mbox{and} \quad \bm W_l(t)^\top \bm U_{l, 2} = \rho_l(t) \bm V_{l, 2}
\end{equation}
for all $l \in [L]$ and $t \geq 0$, where $\rho_l(t)$ satisfies \eqref{eq:rho} for $t \geq 1$ with $\rho_l(0) = \epsilon_l$. First, complete $\bm V_{1, 2}$ to an orthonormal basis for $\R^d$ as $\bm V_1 = [ \bm V_{1, 1} \ \bm V_{1, 2} ]$. Then for each $l \in [L-1]$, set $\bm U_l = [ \bm U_{l, 1} \ \bm U_{l, 2} ]$ where $\bm U_{l, 1} = \bm W_l(0) \bm V_{l, 1} / \epsilon_l$ and $\bm V_{l+1} = [ \bm V_{l+1, 1} \ \bm V_{l+1, 2} ]$ where $\bm V_{l+1, 1} = \bm U_{l, 1}$, and finally set $\bm U_L = [ \bm U_{L, 1} \ \bm U_{L, 2} ]$ where $\bm U_{L, 1} = \bm W_L(0) \bm V_{L, 1} / \epsilon_L$. We note that $\bm V_{l+1} = \bm U_l$ for each $l \in [L-1]$ and $\bm U_l, \bm V_l$ are orthogonal since $\bm W_{l}(0) / \epsilon_l$ is orthogonal for all $l \in [L]$. Then we have
\begin{equation}\label{eq:svd_12}
    \bm U_{l, 1}^\top \bm W_l(t) \bm V_{l, 2} = \rho_l(t) \bm U_{l, 1}^\top \bm U_{l, 2} = \bm 0
\end{equation}
for all $l \in [L]$ and $t \geq 0$, where the first equality follows from \eqref{eq:svd_22}. Similarly, we also have
\begin{equation}\label{eq:svd_21}
    \bm U_{l, 2}^\top \bm W_l(t) \bm V_{l, 1} = \rho(t) \bm V_{l, 2}^\top \bm V_{l, 1} = \bm 0
\end{equation}
for all $l \in [L]$ and $t \geq 0$, where the first equality also follows from \eqref{eq:svd_22}. Therefore, combining \eqref{eq:svd_22}, \eqref{eq:svd_12}, and \eqref{eq:svd_21} yields
\begin{equation*}
    \bm U_l^\top \bm W_l(t) \bm V_l = \begin{bmatrix} \bm U_{l, 1} & \bm U_{l, 2} \end{bmatrix}^\top \bm W_l(t) \begin{bmatrix} \bm V_{l, 1} & \bm V_{l, 2} \end{bmatrix} = \begin{bmatrix}
        \widetilde{\bm W}_l(t) & \bm 0 \\ \bm 0 & \rho_l(t) \bm I_m
    \end{bmatrix}
\end{equation*}
for all $l \in [L]$, where $\widetilde{\bm W}_l(0) = \epsilon_l \bm I_{2r}$ by construction of $\bm U_{l, 1}$. This directly implies \eqref{eq:weight_structures}, completing the proof.
\end{proof}

\subsection{Low-rank bias in \Cref{thm:1}}\label{app:learning_rate}

Here, we verify the claims following \Cref{thm:1} and give a precise characterization of the rate of decay of $\rho_l$ as given by \eqref{eq:rho} and the conditions on learning rate $\eta$ needed to achieve such behavior. These are given in the following lemma.

\begin{lemma}\label{lem:learning_rate}
In the setting of \Cref{thm:1}, suppose $0 < \epsilon_l = \epsilon \leq 1$ for all $l \in [L]$ and $0 < \eta \leq \frac{1}{\lambda + \epsilon}$. Then for all $t \geq 0$, the updates of $\rho_l(t)$ in \eqref{eq:rho} satisfy $\rho_l(t) = \rho(t)$ for some $\rho$, and
\begin{equation}\label{eq:rho_decay}
    \epsilon \cdot (1-\eta \cdot (\lambda + \epsilon))^t \leq \rho(t) \leq \epsilon \cdot (1-\eta\lambda)^t.
\end{equation}
\end{lemma}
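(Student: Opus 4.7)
}

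The plan is to proceed in two stages: first establish the symmetry $\rho_l(t) = \rho(t)$, then induct on $t$ to get the two-sided bound on $\rho(t)$. For the symmetry, I would argue by induction on $t$: at $t=0$ we have $\rho_l(0) = \epsilon_l = \epsilon$ for every $l$, and the recursion \eqref{eq:rho} depends on $l$ only through the product $\prod_{k \neq l} \rho_k^2(t-1)$, which by the inductive hypothesis is the same value $\rho(t-1)^{2(L-1)}$ for every $l$. Hence all $\rho_l(t+1)$ coincide, and the recursion collapses to the scalar relation
\begin{equation*}
    \rho(t) \;=\; \rho(t-1)\,\bigl(1 - \eta\lambda - \eta\,\rho(t-1)^{2(L-1)}\bigr), \qquad \rho(0) = \epsilon.
\end{equation*}

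Next I would prove the two bounds jointly by induction on $t$, with the stronger induction hypothesis $\epsilon(1-\eta(\lambda+\epsilon))^t \leq \rho(t) \leq \epsilon(1-\eta\lambda)^t$, which in particular forces $0 \leq \rho(t) \leq \epsilon \leq 1$. The upper bound is the easy half: since $\eta\rho(t)^{2(L-1)} \geq 0$ and $\rho(t) \geq 0$, we immediately get $\rho(t+1) \leq \rho(t)(1-\eta\lambda) \leq \epsilon(1-\eta\lambda)^{t+1}$. The lower bound requires using the upper bound to control the nonlinear term: from $\rho(t) \leq \epsilon \leq 1$ and $L \geq 2$ (so $2(L-1) \geq 2 \geq 1$), we have
\begin{equation*}
    \rho(t)^{2(L-1)} \;\leq\; \epsilon^{2(L-1)} \;\leq\; \epsilon,
\end{equation*}
so that $1 - \eta\lambda - \eta\rho(t)^{2(L-1)} \geq 1 - \eta(\lambda+\epsilon) \geq 0$, where the last inequality uses the hypothesis $\eta \leq 1/(\lambda+\epsilon)$. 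Multiplying by $\rho(t) \geq 0$ and applying the inductive lower bound gives $\rho(t+1) \geq \epsilon(1-\eta(\lambda+\epsilon))^{t+1}$, closing the induction.

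The only subtle point is the interplay between the assumptions $\epsilon \leq 1$, $L \geq 2$, and $\eta(\lambda+\epsilon) \leq 1$: all three are needed simultaneously to conclude that $\rho$ remains nonnegative and that the nonlinear correction $\eta\rho^{2(L-1)}$ is dominated by $\eta\epsilon$. I expect this bookkeeping to be the only genuine step; the rest is routine. The step size condition is essentially tight in the sense that one cannot push $\eta$ beyond $1/(\lambda+\epsilon)$ without risking $\rho(t)$ becoming negative at $t=1$, which would break the monotone decrease picture underlying the implicit low-rank bias discussed after \Cref{thm:1}.
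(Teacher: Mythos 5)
Your proof is correct and follows essentially the same route as the paper's: establish $\rho_l(t)=\rho(t)$ by symmetry of the recursion, then induct jointly on the two bounds, using the upper bound $\rho(t)\le\epsilon\le 1$ together with $2(L-1)\ge 1$ to dominate $\eta\rho(t)^{2(L-1)}$ by $\eta\epsilon$ in the lower bound. One minor caveat on your closing remark: the condition $\eta\le 1/(\lambda+\epsilon)$ is sufficient but not tight for keeping $\rho(1)\ge 0$ — nonnegativity at $t=1$ only requires $\eta\le 1/(\lambda+\epsilon^{2(L-1)})$, which is a weaker constraint whenever $L>2$ or $\epsilon<1$; the stronger bound is chosen so the geometric lower bound closes cleanly at every step.
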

Since $\lambda$ and $\epsilon$ are often chose to be small, the above lemma implies that a small learning rate is not required to achieve a low-rank solution. Moreover, by choice of $\eta$, when weight decay is employed (i.e., $\lambda > 0$) the above inequality implies that $\rho(t) \rightarrow 0$ as $t \rightarrow \infty$. When $\lambda = 0$, we instead have that $\rho$ is bounded by $\epsilon$.
\begin{proof}[Proof of \Cref{lem:learning_rate}]
If $\rho_l(0) = \epsilon$ for all $l \in [L]$, it is clear that $\rho_l(t) = \rho(t)$ for some $\rho$ for all $t \geq 0$, and the updates take the form
\begin{equation*}
    \rho(t) = \rho(t-1) \cdot \left[1 - \eta \cdot \left(\lambda + \rho(t-1)^{2(L-1)}\right)\right]
\end{equation*}
for each $t \geq 0$. We proceed by induction. For $t = 0$, since $\rho(0) = \epsilon$, the claim holds trivially. Now suppose \eqref{eq:rho_decay} holds for some $t \geq 0$. By choice of $\eta$, we have that $1 - \eta \cdot(\lambda + \epsilon) \geq 0$, so $\rho(t) \geq 0$. It then follows that
\begin{equation*}
    \rho(t+1) = \rho(t) \cdot \left[1 - \eta \cdot \left(\lambda + \rho(t)^{2(L-1)}\right)\right] \leq \rho(t) \cdot (1-\eta \lambda) \leq \epsilon \cdot (1-\eta \lambda)^{t+1}
\end{equation*}
by the fact that $\rho(t) \leq \epsilon \cdot (1-\eta \lambda)^t$. Next, by choice of $\eta$ and initial condition, we have that $\rho(t) \leq \epsilon$, so that
\begin{equation*}
    \rho(t+1) = \rho(t) \cdot \left[1 - \eta \cdot \left(\lambda + \rho(t)^{2(L-1)}\right)\right] \geq \rho(t) \cdot (1 - \eta \cdot (\lambda + \epsilon)) \geq \epsilon \cdot (1-\eta \cdot (\lambda + \epsilon))^{t+1}
\end{equation*}
since $\epsilon^{2(L-1)} \leq \epsilon$ by $\epsilon \leq 1$. The claim follows.
\end{proof}
\subsection{Proof of \Cref{prop:1}}\label{app:proof_prop1}

\begin{proof}
First, it follows from \Cref{thm:1} that for any $1 \leq i \leq j \leq L$ we have
\begin{equation}\label{eq:e2e}
    \mW_{j:i}(t) = \bm U_{j, 1} \widetilde{\mW}_{j:i}(t) \bm V_{i, 1}^\top + (\prod_{k=i}^j \rho_k(t)) \cdot \bm U_{j, 2} \bm V_{i, 2}^\top
\end{equation}
for all $t\geq 0$, where $\bm U_{l, 1}, \bm V_{l, 1} \in \R^{d \times 2r}$ and $\bm U_{l, 2}, \bm V_{l, 2} \in \R^{d \times m}$ are the first $2r$ and last $m$ columns of $\bm U_l, \bm V_l \in \R^{d \times d}$ respectively. 

The key claim to be shown here is that $\widehat{\mW}_l(t) = \widetilde{\mW}_l(t)$ for all $l \in [L]$ and $t \geq 0$. Afterwards, it follows straightforwardly from \eqref{eq:e2e} that
\begin{align*}
    &\left\| f(\bm \Theta(t)) - \widehat{f}(\widehat{\bm \Theta}(t), \bm U_{L, 1}, \bm V_{1, 1}) \right\|_F^2 \\
    = &\left\| \bm U_{L, 1} \widetilde{\mW}_{L:1}(t) \bm V_{1, 1}^\top + (\prod_{l=1}^L \rho_l(t)) \cdot \bm U_{L, 2} \bm V_{1, 2}^\top - \bm U_{L, 1} \widehat{\mW}_{L:1}(t) \bm V_{L, 1}^\top\right\|_F^2 \\
    = &\left\| \bm U_{L, 1}(\widetilde{\mW}_{L:1}(t) - \widehat{\mW}_{L:1}(t)) \bm V_{1, 1}^\top + (\prod_{l=1}^L \rho_l(t)) \cdot \bm U_{L, 2} \bm V_{1, 2}^\top \right\|_F^2
    = \left\|(\prod_{l=1}^L \rho_l(t)) \cdot \bm U_{L, 2} \bm V_{1, 2}^\top \right\|_F^2 \leq m \cdot \prod_{l=1}^L \epsilon_l^2.
\end{align*}
We proceed by induction. For $t = 0$, we have that
\begin{equation*}
    \widehat{\mW}_l(0) = \bm U_{l, 1}^\top \mW_l(0) \bm V_{l, 1} = \widetilde{\mW}_l(0)
\end{equation*}
for all $l \in [L]$ by \eqref{eq:e2e} and choice of initialization. 

Now suppose $\widehat{\mW}_l(t) = \widetilde{\mW}_l(t)$ for all $l \in [L]$. Comparing
\begin{align*}
    \widehat{\mW}_l(t+1) = (1 - \eta \lambda) \widehat{\mW}_l(t) - \eta \nabla_{\widehat{\mW}_l} \widehat{\ell}(\widehat{\bm \Theta}(t))
\end{align*}
with
\begin{align*}
    \widetilde{\mW}_l(t+1) &= \bm U_{l, 1}^\top \mW_l(t+1) \bm V_{l, 1} \\
    &= \bm U_{l, 1}^\top\left[(1-\eta\lambda) \mW_l(t) - \eta \nabla_{\mW_l} \ell(\bm \Theta(t))\right] \bm V_{l, 1} \\
    &= (1-\eta\lambda) \widetilde{\mW}_l(t) - \eta \bm U_{l, 1}^\top \nabla_{\mW_l} \ell(\bm \Theta(t)) \bm V_{l, 1}
\end{align*}
it suffices to show that
\begin{equation}\label{eq:grad_eq}
    \nabla_{\widehat{\mW}_l} \widehat{\ell}(\widehat{\bm \Theta}(t)) = \bm U_{l, 1}^\top \nabla_{\mW_l} \ell(\bm \Theta(t)) \bm V_{l, 1}, \; \forall l \in [L]
\end{equation}
to yield $\widehat{\mW}_l(t+1) = \widetilde{\mW}_l(t+1)$ for all $l \in [L]$. Computing the right hand side of \eqref{eq:grad_eq}, we have
\begin{align*}
    \bm U_{l, 1}^\top \nabla_{\mW_l} \ell(\bm \Theta(t)) \bm V_{l, 1} &= \bm U_{l, 1}^\top \mW^\top_{L:l+1}(t) (\mW_{L:1}(t) - \bm \Phi)\mW^\top_{l-1:1}(t)\bm V_{l, 1} \\
    &= (\mW_{L:l+1}(t) \bm U_{l, 1})^\top (\mW_{L:1}(t) - \bm \Phi) (\bm V_{l, 1}^\top \mW_{l-1:1}(t))^\top
\end{align*}
where 
\begin{align*}
    \mW_{L:l+1}(t) \bm U_{l, 1} = \left(\bm U_{L, 1} \widetilde{\mW}_{L:l+1}(t) \bm V_{l+1, 1}^\top + (\prod_{k=l+1}^L \rho_k(t)) \cdot \bm U_{L, 2} \bm V_{l+1, 2}^\top\right) \bm U_{l, 1} = \bm U_{L, 1} \widetilde{\mW}_{L:l+1}(t)
\end{align*}
by \eqref{eq:e2e} and the fact that $\bm U_l = \bm V_{l+1}$, and similarly
\begin{align*}
    \bm V_{l, 1}^\top \mW_{l-1:1}(t) = \bm V_{l, 1}^\top \left(\bm U_{l-1, 1} \widetilde{\mW}_{l-1:1}(t) \bm V_{1, 1}^\top + (\prod_{k=1}^{l-1}\rho_k(t)) \cdot \bm U_{l-1, 2} \bm V_{1, 2}^\top\right) =  \widetilde{\mW}_{l-1:1}(t) \bm V_{1, 1}^\top.
\end{align*}
We also have that
\begin{align*}
    \bm U_{L, 1}^\top (\mW_{L:1}(t) - \bm \Phi) \bm V_{1, 1} &= \bm U_{L, 1}^\top \left(\bm U_{L, 1} \widetilde{\mW}_{L:1}(t) \bm V_{1, 1}^\top + (\prod_{k=1}^L \rho_k(t)) \cdot \bm U_{L, 2} \bm V_{1, 2}^\top - \bm \Phi \right) \bm V_{1, 1} \\
    &= \widetilde{\mW}_{L:1}(t) - \bm U_{L, 1}^\top \bm \Phi \bm V_{1, 1}
\end{align*}
so putting together the previous four equalities yields
\begin{align*}
    \bm U_{l, 1}^\top \nabla_{\mW_l} \ell(\bm \Theta(t)) \bm V_{l, 1} &= (\mW_{L:l+1}(t) \bm U_{l, 1})^\top (\mW_{L:1}(t) - \bm \Phi) (\bm V_{l, 1}^\top \mW_{l-1:1}(t))^\top \\
    &= \widetilde{\mW}_{L:l+1}^\top(t) \bm U_{L, 1}^\top (\mW_{L:1}(t) - \bm \Phi) \bm V_{1, 1}\widetilde{\mW}^\top_{l-1:1}(t) \\
    &= \widetilde{\mW}_{L:l+1}^\top(t)(\widetilde{\mW}_{L:1}(t) - \bm U_{L, 1}^\top \bm \Phi \bm V_{1, 1}) \widetilde{\mW}^\top_{l-1:1}(t).
\end{align*}
On the other hand, the left hand side of \eqref{eq:grad_eq} gives
\begin{align*}
    \nabla_{\widehat{\mW}_l} \widehat{\ell}(\widehat{\bm \Theta}(t)) &= \widehat{\mW}_{L:l+1}(t)^\top \bm U_{L, 1}^\top (\bm U_{L, 1} \widehat{\mW}_{L:1}(t)\bm V_{1, 1}^\top - \bm \Phi) \bm V_{1, 1} \widehat{\mW}_{l-1:1}(t)^\top \\
    &= \widehat{\mW}_{L:l+1}(t)^\top(\widehat{\mW}_{L:1}(t) - \bm U_{L, 1}^\top \bm \Phi \bm V_{1, 1}) \widehat{\mW}_{l-1:1}(t)^\top
\end{align*}
so \eqref{eq:grad_eq} holds by the fact that $\widehat{\mW}_l(t) = \widetilde{\mW}_l(t)$ for all $l \in [L]$, completing the proof.
\end{proof}

\end{document}